\newtheorem{theorem}{Theorem}
\newtheorem{proposition}{Proposition}
\newtheorem{lemma}{Lemma}
\newcommand{\defeq}{\mathrel{\mathop:}=}
\newcommand{\by}{\vec{y}}
\newcommand{\bx}{\vec{x}}
\newcommand{\bepsilon}{\vec{\epsilon}}
\icmltitlerunning{Data Poisoning Attacks on Stochastic Bandits}
\begin{document}

\twocolumn[
\icmltitle{Data Poisoning Attacks on Stochastic Bandits}



\icmlsetsymbol{equal}{*}

\begin{icmlauthorlist}
\icmlauthor{Fang Liu}{ece}
\icmlauthor{Ness Shroff}{ece,cse}
\end{icmlauthorlist}

\icmlaffiliation{ece}{Department of Electrical and Computer Engineering,}
\icmlaffiliation{cse}{Department of Computer Science and Engineering, The Ohio State University, Columbus, Ohio, USA}

\icmlcorrespondingauthor{Fang Liu}{liu.3977@osu.edu}
\icmlcorrespondingauthor{Ness Shroff}{shroff.11@osu.edu}

\icmlkeywords{Adversarial Machine Learning, Data Poisoning, Multi-armed Bandits}

\vskip 0.3in
]



\printAffiliationsAndNotice{}  

\begin{abstract}
Stochastic multi-armed bandits form a class of online learning problems that have important applications in online recommendation systems, adaptive medical treatment, and many others. Even though potential attacks against these learning algorithms may hijack their behavior, causing catastrophic loss in real-world applications, little is known about adversarial attacks on bandit algorithms. In this paper, we propose a framework of offline attacks on bandit algorithms and study convex optimization based attacks on several popular bandit algorithms. We show that the attacker can force the bandit algorithm to pull a target arm with high probability by a slight manipulation of the rewards in the data. Then we study a form of online attacks on bandit algorithms and propose an adaptive attack strategy against any bandit algorithm \emph{without the knowledge of the bandit algorithm}. Our adaptive attack strategy can hijack the behavior of the bandit algorithm to suffer a linear regret with only a logarithmic cost to the attacker. Our results demonstrate a significant security threat to stochastic bandits. 
\end{abstract}

\section{Introduction}
Understanding adversarial attacks on machine learning systems is essential to developing effective defense mechanisms and an important step toward trustworthy artificial intelligence. 
A class of adversarial attacks on machine learning that have received much attention is data poisoning~\cite{biggio2012poisoning,mei2015using,xiao2015feature,alfeld2016data,li2016data,wang2018data}. Here, the attacker is able to access the leaner's training data, and has the power to manipulate a fraction of the training data in order to make the learner satisfy certain objectives. This is motivated by modern industrial scale applications of machine learning systems, where data collection and policy updates are done in a distributed way. Attacks can happen when the learner is not aware of the attacker's access to the training data.

While there has been a line of research on adversarial attacks on deep learning~\cite{43405,huang2017adversarial,lin2017tactics} and supervised learning~\cite{biggio2012poisoning,mei2015using,xiao2015feature,alfeld2016data,li2016data}, little is known on adversarial attacks on stochastic multi-armed bandits, which is a form of online learning with limited feedback. This is potentially hazardous since stochastic MAB are widely used in the industry to recommend news articles~\cite{li2010contextual}, display advertisements~\cite{chapelle2015simple}, allocate medical treatment~\cite{thompson1933likelihood} among many others. Hence, understanding the impact of adversarial attacks on bandit algorithms is an urgent yet open problem.

Recently, there has been an important piece of offline data poisoning attacks for the contextual bandit algorithm~\cite{ma2018data}. They assume that the bandit algorithm updates periodically and that the attacker can manipulate the rewards in the data before the updates in order to hijack the behavior of the bandit algorithm. Consider the news recommendation as a running example for this offline attack model. A news website has $K$ articles (i.e., arms or actions) and runs a bandit algorithm to learn a recommendation policy. Every time a user visits the website, the bandit algorithm displays an article based on historical data. Then the website receives a binary reward indicating whether the user clicks on the displayed article or not. The website keeps serving the users throughout the day and updates the bandit algorithm during the night. An attacker can perform offline data poisoning attacks before the bandit algorithm is updated. More specifically, the attacker may poison the rewards collected during the daytime and control the behavior of the bandit algorithm as it wants. The authors in~\cite{ma2018data} show that the offline attack strategy on LinUCB-type contextual bandit algorithm~\cite{li2010contextual} can be formulated as a convex optimization problem. However, offline attack strategies for classical bandit algorithms are still open.

In contrast to offline attacks, an online form of data poisoning attacks against bandit algorithms has also been proposed by~\cite{jun2018adversarial}. They assume that the bandit algorithm updates step by step and the attacker sits in-between the bandit environment and the bandit algorithm. At each time step, the bandit algorithm pulls an arm and the attacker eavesdrops on the decision. Then the attacker makes an attack by manipulating the reward generated from the bandit environment. The bandit algorithm receives the poisoned reward without knowing the presence of the attacker and updates accordingly. The goal of the attacker is to control which arm appears to the bandit algorithm as the best arm at the end. Efficient attack strategies against $\epsilon$-greedy and Upper Confidence Bounds (UCB) algorithms are proposed by~\cite{jun2018adversarial}. However, online attack strategies for other bandit algorithms (e.g., Thompson Sampling~\cite{thompson1933likelihood} and UCBoost~\cite{liu2018ucboost}) are still unknown.

In this work, we have a systematic investigation of data poisoning attacks against bandit algorithms and address the aforementioned open problems. We study the data poisoning attacks in both the offline and the online cases. In the offline setting, the bandit algorithm updates periodically and the attacker can manipulate the rewards in the collected data before the update occurs. In the online setting, the attacker eavesdrops on the bandit algorithm and manipulates the feedback reward. The goal of the attacker is that the bandit algorithm considers the target arm as the optimal arm at the end. Specifically, we make the following contributions to data poisoning attacks on stochastic bandits.
\begin{enumerate}
\item We propose an optimization based framework for offline attacks on bandit algorithms. Then, we instantiate three offline attack strategies against $\epsilon$-greedy, UCB algorithm and Thompson Sampling, which are the solutions of the corresponding convex optimization problems. That is, there exist efficient attack strategies for the attacker against these popular bandit algorithms. 
\item We study the online attacks on bandit algorithms and propose an adaptive attack strategy that can hijack any bandit algorithm \emph{without knowing the bandit algorithm}. As far as we know, this is the first negative result showing that there is no robust and good stochastic bandit algorithm that can survive online poisoning attacks.
\item We evaluate our attack strategies by numerical results. Our attack strategies are efficient in forcing the bandit algorithms to pull a target arm at a relatively small cost. Our results expose a significant security threat as bandit algorithms are widely employed in the real world applications.
\end{enumerate} 

More recently, there is a line of works by~\cite{lykouris2018stochastic,gupta2019better} studying an adversarial corruption model. While they assume that the attacker has to attack all the arms before the learner chooses an arm, we consider the case where the attacker's strategy is aware of and adaptive to the decision of the learner. The difference leads to opposite  results. While they propose a robust bandit algorithm, our work shows that the existing algorithms are vulnerable to the adversarial attacks.

This paper is organized as the following. In Section~\ref{sec:problem}, we introduce the problem setting and attack system models. Then, we propose the offline attack problem in Section~\ref{sec:offline}. In Section~\ref{sec:online}, we study the online attack problem and propose an adaptive attack strategy. The numerical results to evaluate our attack strategies are provided in Section~\ref{sec:simul}. Finally, we conclude the paper with discussion in Section~\ref{sec:conclusion}.

\section{Problem Formulation}\label{sec:problem}
We consider the classical stochastic bandit setting. Suppose that there is a set $\mathcal{A} = \{1,2,\ldots,K\}$ of $K$ arms and the bandit algorithm proceeds in discrete time $t=1,2,\ldots, T$. At each round $t$, the algorithm pulls an arm $a_t\in\mathcal{A}$ and the bandit environment generates a reward $r_t\in\mathcal{R}$ such that 
\begin{equation}\label{eqn:reward}
r_t=\mu_{a_t} + \eta_t,
\end{equation}
where $\eta_t$ is a zero-mean, $\sigma$-subGaussian noise and $\mu_{a_t}$ is the instantaneous reward at time $t$. In other words, the expected reward of pulling arm $a$ is $\mu_a$. Note that $\{\mu_a\}_{a\in\mathcal{A}}$ are \emph{unknown to the bandit algorithm and the attacker}. 

The performance of the bandit algorithm is measured by the regret, which is the expected difference between the total rewards obtained by an oracle that always pulls the optimal arm (i.e., the arm with the largest expected reward $\max_{a\in\mathcal{A}}\mu_a$) and the accumulative rewards collected by the bandit algorithm up to time horizon $T$. Formally, the regret $R(T)$ is given by
\begin{equation}
R(T) = \mathbb{E}\left[\max_{a\in\mathcal{A}}\mu_a T - \sum_{t=1}^T r_t\right].
\end{equation}
In this work, we consider the uniformly good bandit algorithm that incurs sub-linear regret, i.e., $R(T) = o(T)$. 

For each reward $r_t$ returned from the bandit environment, the attacker manipulates the reward into 
\begin{equation}\label{eqn:poison}
r'_t = r_t+\epsilon_t.
\end{equation}
Then the accumulated attack cost of the attacker, $C(T)$, is measured by the norm of the vector $(\epsilon_1,\ldots,\epsilon_T)^T$. For simplicity, we consider the $l^2$-norm in the offline setting and the $l^1$-norm in the online setting. Note that the results in this work can be easily extended to any norm. For example, consider the $l^p$-norm, the total attack cost of the attacker is
\begin{equation}
C(T) = \left(\sum_{t=1}^T |\epsilon_t|^p\right)^{1/p}.
\end{equation}

Without loss of generality, we assume that arm $a^*$ is a suboptimal attack target, such that $\mu_{a^*} < \max_{a\in\mathcal{A}}\mu_a$. The attacker's goal is to manipulate the bandit algorithm into pulling arm $a^*$ frequently. To avoid being detected, the attacker also wants to keep the cost as small as possible. 

\subsection{Offline attack system model}
The offline attack system model is illustrated in Figure~\ref{fig:offlineModel}. Besides the bandit algorithm, the bandit environment and the attacker, there is a data buffer of size $T$. This models the setting where updates of the bandit algorithm happen in mini-batches of size $T$. For round $t= 1,\ldots,T$, the bandit algorithm sequentially pulls arm $a_t$. Then the environment generates the reward $r_t$ according to Equation~(\ref{eqn:reward}) and send the tuple $(a_t,r_t)$ to the data buffer. The data buffer stores the data stream until round $T$. At the end of round $T$, the attacker accesses the data buffer and poisons the data by Equation~(\ref{eqn:poison}). Finally, the data buffer sends the poisoned data stream $\{(a_t,r'_t)\}_{t\leq T}$ to the bandit algorithm and the bandit algorithm updates according to the received data without knowing the existence of the attacker.

The goal of the attacker in the offline setting is to force the bandit algorithm to pull the target arm $a^*$ \emph{with high probability} at round $T+1$ (i.e., after updating with poisoned data) while incurring only a small cost. This means that the attacker wants to make the poisoning effort $\epsilon_t$ as small as possible to keep stealthy.
\begin{figure}[t]
  \centering
    \includegraphics[width=0.3\textwidth]{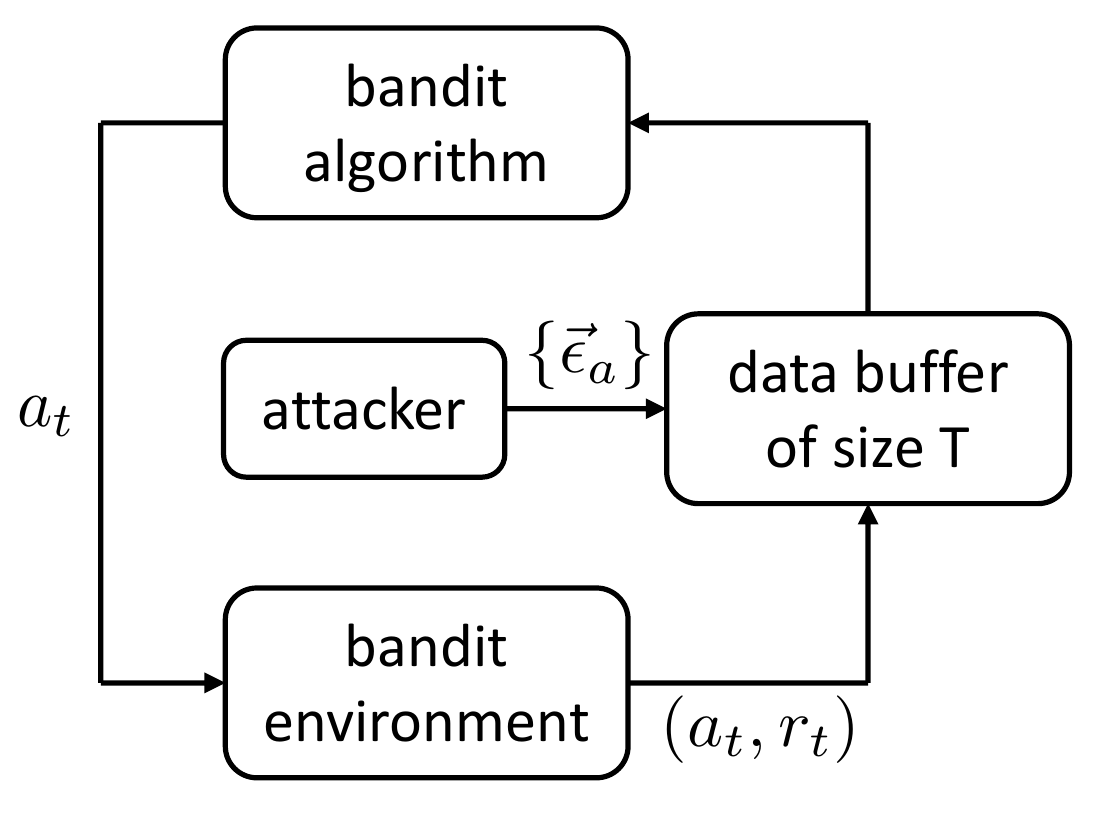}
     \caption{Offline attack system model}
     \label{fig:offlineModel}
\end{figure}

\subsection{Online attack system model}
The online attack system model is illustrated in Figure~\ref{fig:onlineModel}. In the online setting, the bandit algorithm updates instantly for each time step. The attacker stealthily monitors the decision of the bandit algorithm $a_t$ at each time $t$ and poisons the reward signal returned from the bandit environment by equation (\ref{eqn:poison}). Then the bandit algorithm receives the manipulated reward signal $r'_t$ and updates unaware of the attacker.

The goal of the attacker in the online setting is to hijack the behavior of the bandit algorithm \emph{with high probability} by manipulating the reward signal so that the bandit algorithm pulls the target arm $a^*$ in $\Theta(T)$ time steps. In the meantime, the attacker wants to control its attack cost by poisoning as infrequently as possible in order to avoid being detected. Note that $\epsilon_t=0$ is considered as no attack. 

By the linearity of expectation and $\eta_t$ is a zero-mean noise,
\begin{equation}\label{eqn:regretDecompose}
R(T) = \sum_{a\in\mathcal{A}} \left(\max_{i\in\mathcal{A}}\mu_i -  \mu_{a}\right)\mathbb{E}[N_a(T)],
\end{equation}
where $N_a(t)$ is the number of pulling arm $a$ up to time $t$.
Thus, the attack goal means that the attacker wants the bandit algorithm to incur a linear regret by incurring only a sub-linear attack cost.

\begin{figure}[t]
  \centering
    \includegraphics[width=0.3\textwidth]{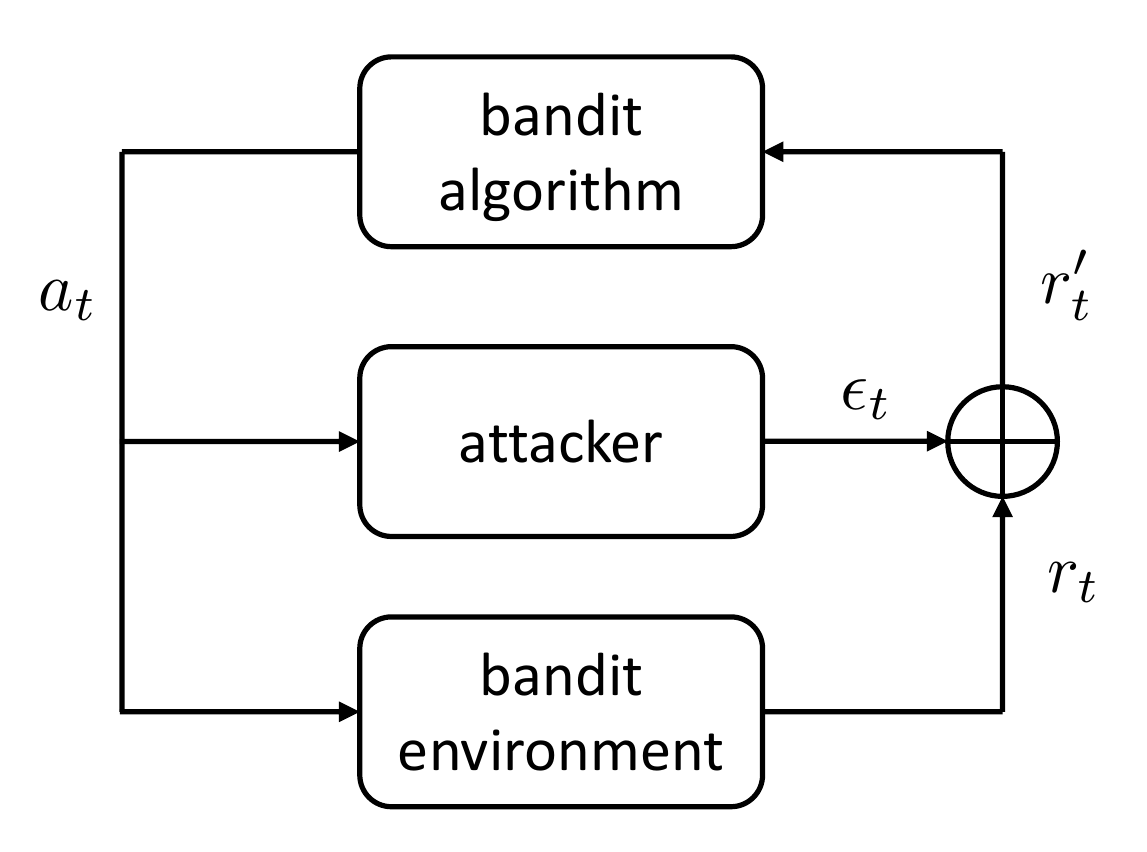}
     \caption{Online attack system model}
     \label{fig:onlineModel}
\end{figure}

\section{Offline Attacks}\label{sec:offline}
In this section, we introduce the offline attack framework to stochastic bandits. 
The updates of the bandit algorithm happen in mini-batches of size $T$\footnote{The batch size $T$ is a relatively large integer compared to $K$.}. Between these consecutive updates, the bandit algorithm follows a fixed algorithm obtained from the last update. This allows the attacker to poison the historical data before the update and force the algorithm to pull a target arm $a^*$ with high probability. 

Let $m_a$ be the number of times arm $a$ was pulled up to time $T$, i.e., $m_a \defeq N_a(T)$. For each arm $a\in\mathcal{A}$, let $\by_a\in\mathcal{R}^{m_a}$ be the corresponding reward vector returned from the bandit environment when arm $a$ was pulled. That is, $\by_a \defeq (r_t: a_t = a)^T$. Let $\bepsilon_a\in\mathcal{R}^{m_a}$ be the poisoning attack strategy of the attacker, i.e., $\bepsilon_a \defeq (\epsilon_t:a_t = a)^T$. The poisoned reward vector for arm $a$ after the attack becomes $\by_a+\bepsilon_a$. To avoid being detected, the attacker hopes to make the poisoning $\bepsilon_a$ as small as possible. Without loss of generality, we consider the $l^2$-norm attack cost in the offline attacks. We have that 
\begin{equation}\label{eqn:offCost}
C(T)^2 = \sum_{t=1}^T\epsilon^2_t = \sum_{a\in\mathcal{A}}||\bepsilon_a||_2^2.
\end{equation}
Thus, the attacker's offline attack problem can be formulated as the following optimization problem $P$,
\begin{align}
P: \min_{\bepsilon_a:a\in\mathcal{A}} &~~~~ \sum_{a\in\mathcal{A}}||\bepsilon_a||^2_2\\
s.t. &~~~~ \mathbb{P}\{a_{T+1} = a^*\} \geq 1-\delta,
\end{align}
for some error tolerance $\delta>0$.
Note that we define the attack goal as forcing the bandit algorithm to pull arm $a^*$ at the next round with high probability. This is because there are some randomized bandit algorithms, such as $\epsilon$-greedy and Thompson Sampling. It is not feasible to force the randomized algorithm to pull a target arm with probability~1. But it is possible to hijack the behavior of the randomized algorithm with high probability. 
\begin{proposition}\label{prop:offline}
Given some error tolerance $\delta>0$. If $\{\bepsilon_a^*\}_{a\in\mathcal{A}}$ is the optimal solution of problem $P$, then it is the optimal offline attack strategy for the attacker.
\end{proposition}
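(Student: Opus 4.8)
The plan is to argue that Problem $P$ is a faithful reformulation of the attacker's objective, so that its optimal solution coincides with the optimal offline attack strategy essentially by definition. The key observation is that the decision variables of $P$ are in one-to-one correspondence with all admissible offline attack strategies, and that both the objective and the constraint of $P$ transcribe exactly the two requirements placed on the attacker: achieving the goal and minimizing cost.

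First I would recall that an offline poisoning strategy is fully specified by the perturbation vector $(\epsilon_1,\ldots,\epsilon_T)$, and that partitioning this vector according to the arm that was pulled yields precisely the per-arm vectors $\bepsilon_a \defeq (\epsilon_t : a_t = a)^T$ for $a\in\mathcal{A}$. Since the sequence of pulls $\{a_t\}_{t\le T}$ is already fixed once the data buffer is filled at the end of round $T$, the map $(\epsilon_1,\ldots,\epsilon_T)\mapsto\{\bepsilon_a\}_{a\in\mathcal{A}}$ is a bijection. Hence choosing the family $\{\bepsilon_a\}_{a\in\mathcal{A}}$ is the same as choosing an arbitrary offline attack strategy, and nothing is lost by optimizing over the per-arm decomposition.

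Next I would verify that the two ingredients of $P$ match the attacker's goal and cost. By Equation~(\ref{eqn:offCost}) the objective $\sum_{a\in\mathcal{A}}\|\bepsilon_a\|_2^2$ equals $C(T)^2$, and since $t\mapsto t^2$ is strictly increasing on $[0,\infty)$, minimizing $C(T)^2$ has exactly the same set of minimizers as minimizing the cost $C(T)$ itself. The constraint $\mathbb{P}\{a_{T+1}=a^*\}\ge 1-\delta$ is, verbatim, the attacker's stated goal of forcing arm $a^*$ to be pulled at round $T+1$ with high probability. Thus the feasible region of $P$ is precisely the set of attack strategies that achieve the goal, and over this region the objective ranks strategies exactly by their cost.

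Finally I would conclude: an optimal solution $\{\bepsilon_a^*\}_{a\in\mathcal{A}}$ of $P$ is, through the bijection above, an offline attack strategy; it is feasible, so it achieves the attacker's goal; and it attains the least possible cost among all goal-achieving strategies. These are precisely the defining properties of an optimal offline attack strategy, which establishes the claim. The only point requiring any care is the equivalence between minimizing the squared cost and the cost itself, which is immediate from monotonicity on the nonnegative reals; there is no substantive analytic obstacle, since the proposition is in essence a verification that the optimization model $P$ correctly encodes the offline attack problem.
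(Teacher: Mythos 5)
Your proposal is correct and follows essentially the same route as the paper, which simply notes that the result ``follows from equation~(\ref{eqn:offCost}) and the definition of offline attacks''; you merely spell out the details that the paper leaves implicit (the bijection between $(\epsilon_1,\ldots,\epsilon_T)$ and $\{\bepsilon_a\}_{a\in\mathcal{A}}$, the monotonicity argument equating minimizers of $C(T)$ and $C(T)^2$, and the constraint matching the attack goal verbatim). No gaps; your write-up is a faithful, slightly more rigorous rendering of the paper's one-line justification.
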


The proof of Proposition~\ref{prop:offline} follows from equation (\ref{eqn:offCost}) and the definition of offline attacks. 
Note that the constraint of problem $P$ depends on the bandit algorithm. Now, we assume that the attacker knows the bandit algorithm and we introduce algorithm-specific offline attack strategies derived from the optimization problem $P$ for three popular bandit algorithms, $\epsilon$-greedy, UCB and Thompson Sampling. For simplicity, we denote the post-attack empirical mean observed by the bandit algorithm at the end of round $t$ as 
\begin{equation}
\tilde{\mu}_a(t) := \frac{1}{N_a(t)}\sum_{\tau = 1}^t r'_\tau\mathds{1}\{a_\tau = a\}.
\end{equation}

\subsection{Offline attack on $\epsilon$-greedy}
Now, we derive the offline attack strategy for the $\epsilon$-greedy algorithm, which is a randomized algorithm with some decreasing rate function $\alpha_t$. Typically, $\alpha_t = \Theta(1/t)$. At each time $t$, the $\epsilon$-greedy algorithm pulls an arm 
\begin{eqnarray}\label{eqn:It}
a_t=
\begin{cases}
\text{draw uniformly over }\mathcal{A}, & \text{w.p. } \alpha_t\cr
\arg\max_{a\in\mathcal{A}}\tilde{\mu}_{a}(t-1), &\text{otherwise}
\end{cases}.
\end{eqnarray}

At time step $T+1$, the $\epsilon$-greedy algorithm uniformly samples an arm from the arm set $\mathcal{A}$ with probability $\alpha_{T+1}$, which cannot be controlled by the attacker. Then, we set the error tolerance as $\delta = \frac{K-1}{K}\alpha_{T+1}$ since the target arm $a^*$ may also be pulled in the uniform sampling. Thus, the attacker poisons the reward such that the target arm $a^*$ has the largest empirical mean. After the attack, the $\epsilon$-greedy algorithm pulls an arm $a_{T+1}$ at time $T+1$ such that $\mathbb{P}\{a_{T+1} = a^*\} = 1-\delta$. In order to make the target arm the unique arm with the largest empirical mean, we introduce a margin parameter $\xi>0$. So the attacker's optimization problem for attacking $\epsilon$-greedy algorithm is the following problem $P_1$,
\begin{align}
P_1: \min_{\bepsilon_a:a\in\mathcal{A}} &~~~~ \sum_{a\in\mathcal{A}}||\bepsilon_a||^2_2\\
s.t. &~~~~ \tilde{\mu}_{a^*}(T) \geq \tilde{\mu}_a(T)+\xi, ~~~~\forall a\not = a^*,
\end{align}
where $\tilde{\mu}_a(T) = (\by_a+\bepsilon_a)^T\mathds{1}/m_a$ and $\mathds{1}$ is the vector that each element is 1. 
The condition in the above optimization implies that the $\epsilon$-greedy algorithm will play the target arm $a^*$ at the time step $T+1$ with probability of at least $1-\frac{K-1}{K}\alpha_{T+1}$. The following result shows that there exists at least one optimal solution of problem $P_1$, i.e., one optimal offline attack for the $\epsilon$-greedy algorithm.

\begin{theorem}\label{thm:greedy}
Given any margin parameter $\xi>0$. For any reward instance $\{\by_a\}_{a\in\mathcal{A}}$, there exists at least one optimal solution of problem $P_1$, which is a quadratic program with linear constraints. Hence, there exists at least one optimal offline attack for the $\epsilon$-greedy algorithm.
\end{theorem}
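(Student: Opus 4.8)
The plan is to show that $P_1$ is a convex quadratic program whose minimum is attained, by verifying the two ingredients of a standard existence result: a nonempty feasible set, and a continuous, coercive objective over a closed feasible region. First I would pin down the structure. Stacking the per-arm vectors into a single vector $\bepsilon\in\mathcal{R}^T$ (recall $\sum_{a\in\mathcal{A}}m_a = T$), the objective $\sum_{a\in\mathcal{A}}\|\bepsilon_a\|_2^2 = \|\bepsilon\|_2^2$ is a strictly convex quadratic with Hessian equal to the identity. Since $\tilde{\mu}_a(T) = (\by_a+\bepsilon_a)^T\mathds{1}/m_a$ is an affine function of $\bepsilon_a$, each constraint $\tilde{\mu}_{a^*}(T)\geq\tilde{\mu}_a(T)+\xi$ is an affine inequality in $\bepsilon$. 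Hence $P_1$ is a QP with linear constraints, and its feasible set is a closed polyhedron.

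Next I would establish feasibility by exhibiting an explicit point. Because $\tilde{\mu}_{a^*}(T)$ is affine in $\bepsilon_{a^*}$ and strictly increasing in each of its coordinates, adding a sufficiently large positive constant $c$ to every entry of $\bepsilon_{a^*}$ (and setting $\bepsilon_a=\vec{0}$ for $a\neq a^*$) drives $\tilde{\mu}_{a^*}(T)$ above $\max_{a\neq a^*}\tilde{\mu}_a(T)+\xi$. Since the observed rewards are finite and $\xi$ is a fixed finite margin, a finite $c$ suffices, so the feasible set is nonempty.

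Attainment then follows from coercivity. I would pick any feasible $\bepsilon^0$ with value $v_0 = \|\bepsilon^0\|_2^2$ and intersect the feasible polyhedron with the sublevel set $\{\bepsilon:\|\bepsilon\|_2^2\leq v_0\}$. This intersection is closed and bounded, hence compact, and nonempty (it contains $\bepsilon^0$); moreover any global minimizer over the full feasible set must lie in it. The objective is continuous, so by the Weierstrass theorem it attains its minimum on this compact set, and that value is the optimum of $P_1$, giving an optimal solution. Finally, since the constraints of $P_1$ were shown to imply $\mathbb{P}\{a_{T+1}=a^*\}\geq 1-\frac{K-1}{K}\alpha_{T+1}$, an optimizer of $P_1$ is a solution of $P$ specialized to $\epsilon$-greedy, and Proposition~\ref{prop:offline} then yields the claimed optimal offline attack.

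I expect the main obstacle to be the unboundedness of the feasible polyhedron: the $\bepsilon_a$ are free real vectors, so the feasible region is not compact and the Weierstrass theorem does not apply directly. Coercivity of $\|\bepsilon\|_2^2$ is exactly what repairs this, by letting me restrict attention to a compact sublevel set without losing any minimizer. I would also remark that strict convexity in fact makes the minimizer \emph{unique}, which is stronger than the ``at least one'' in the statement, but existence is all that the theorem requires.
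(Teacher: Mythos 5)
Your proof is correct, and it follows the same overall route as the paper: identify $P_1$ as a quadratic program with affine constraints, exhibit an explicit feasible point, conclude attainment, and invoke Proposition~\ref{prop:offline}. Two details differ, both in your favor. First, your feasible point raises the target arm's rewards by a large constant $c$ with $\bepsilon_a=\vec{0}$ elsewhere, whereas the paper leaves $\bepsilon_{a^*}$ arbitrary and shifts every other arm down, setting $\bepsilon_a = \left[(\by_{a^*}+\bepsilon_{a^*})^T\mathds{1}/m_{a^*} - \by_a^T\mathds{1}/m_a - \xi\right]\mathds{1}$ for $a\neq a^*$; both constructions are valid, and the paper's has the minor advantage of being fully explicit (no ``sufficiently large $c$'' to quantify). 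Second, and more substantively, the paper justifies attainment by asserting that $P_1$ has ``non-empty and compact constraints'' --- but as you correctly observe, the feasible set is an unbounded closed polyhedron, so that assertion is wrong as stated. Your coercivity argument (intersect the feasible set with the sublevel set $\{\bepsilon : \|\bepsilon\|_2^2 \leq v_0\}$ at a feasible point, then apply Weierstrass on the resulting compact set) is the correct repair, and it effectively recovers what the paper presumably intended; alternatively one could cite the Frank--Wolfe theorem that a quadratic bounded below on a nonempty polyhedron attains its infimum. Your closing remark that strict convexity of $\|\bepsilon\|_2^2$ yields a \emph{unique} minimizer is also correct and strictly stronger than the existence claim in the theorem.
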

%
%
The proof of Theorem~\ref{thm:greedy} is provided in Section~\ref{app:greedy} in the supplementary material.
Note that the error tolerance parameter, $\delta=\frac{K-1}{K}\alpha_{T+1}$, depends on the rate function $\alpha_t$ of the $\epsilon$-greedy algorithm, which is not controllable by the attacker. This counts the exploration introduced by the bandit algorithm's inner randomization, which can not be manipulated by the attacker. However, the attacker can wait for some large enough $T$ since the rate function $\alpha_t$ goes to zero finally. Moreover, the attacker's strategy (problem $P_1$) does not depend on the rate function, i.e., the attacker does not require the knowledge of the parameter of the $\epsilon$-greedy.
\subsection{Offline attack on UCB}
Now we derive the offline attack strategy for the classical UCB algorithm. For each time $t$, the UCB algorithm pulls the arm with the highest UCB index:
\begin{equation}
a_t = \arg\max_{a\in\mathcal{A}} u_a(t) := \tilde{\mu}_a(t-1) + 3\sigma\sqrt{\frac{\log t}{N_a(t-1)}}.
\end{equation}
The UCB algorithm pulls the target arm $a^*$ at time $T+1$ if and only if the UCB index of arm $a^*$ is the unique largest one. Thus, the attacker can manipulate the rewards to satisfy the condition. Given any margin parameter $\xi>0$, the attacker's optimization problem becomes
\begin{align}
P_2: \min_{\bepsilon_a:a\in\mathcal{A}} &~~~~ \sum_{a\in\mathcal{A}}||\bepsilon_a||^2_2\\\nonumber
s.t. &~~~~ u_{a^*}(T+1) \geq u_a(T+1) +\xi, ~~~~\forall a\not = a^*
\end{align}
The condition in the above optimization implies that the UCB algorithm will pull the target arm after the poisoning attack. The following result shows that there exists at least one optimal solution of problem $P_2$, i.e., one optimal offline attack for the UCB algorithm.

\begin{theorem}\label{thm:UCB}
Given any margin parameter $\xi>0$. For any reward instance $\{\by_a\}_{a\in\mathcal{A}}$, there exists at least one optimal solution of problem $P_2$, which is a quadratic program with linear constraints. Hence, there exists at least one optimal offline attack for the UCB algorithm.
\end{theorem}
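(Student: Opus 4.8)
The plan is to recognize that $P_2$ has exactly the same structure as $P_1$ in Theorem~\ref{thm:greedy}, and to reduce it to a quadratic program with linear constraints by an identical argument. The objective $\sum_{a\in\mathcal{A}}\|\bepsilon_a\|_2^2$ is already a convex quadratic form in the decision variables, so the only real work is to verify that each constraint $u_{a^*}(T+1)\geq u_a(T+1)+\xi$ is affine in $\{\bepsilon_a\}$, after which the existence of a minimizer follows from a standard compactness argument.

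First I would unfold the UCB index. By definition $u_a(T+1)=\tilde{\mu}_a(T)+3\sigma\sqrt{\log(T+1)/N_a(T)}$, and since the counts $m_a=N_a(T)$ and the horizon $T$ are fixed once the data buffer is filled, the exploration bonus $3\sigma\sqrt{\log(T+1)/m_a}$ is a \emph{known constant} independent of $\bepsilon_a$. Substituting $\tilde{\mu}_a(T)=(\by_a+\bepsilon_a)^T\mathds{1}/m_a$, the post-attack mean is affine in $\bepsilon_a$, so each constraint rearranges into
\begin{equation}\nonumber
\frac{(\by_{a^*}+\bepsilon_{a^*})^T\mathds{1}}{m_{a^*}} - \frac{(\by_a+\bepsilon_a)^T\mathds{1}}{m_a} \geq \xi + 3\sigma\sqrt{\frac{\log(T+1)}{m_a}} - 3\sigma\sqrt{\frac{\log(T+1)}{m_{a^*}}},
\end{equation}
a single linear inequality in $\bepsilon_{a^*}$ and $\bepsilon_a$ with constant right-hand side. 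Hence $P_2$ is a quadratic program with $K-1$ linear constraints.

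For existence I would invoke the Weierstrass extreme value theorem. The feasible set is nonempty, since raising every entry of $\bepsilon_{a^*}$ increases $\tilde{\mu}_{a^*}(T)$ without bound and thus satisfies all $K-1$ constraints at once, yielding a feasible point with some finite objective value $V_0$. The feasible region is closed (an intersection of half-spaces), and intersecting it with the sublevel set $\{\sum_{a}\|\bepsilon_a\|_2^2\leq V_0\}$ produces a nonempty compact set on which the continuous objective attains its minimum; this minimizer is optimal for $P_2$. By Proposition~\ref{prop:offline}, the resulting $\{\bepsilon_a^*\}_{a\in\mathcal{A}}$ is an optimal offline attack against UCB.

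The main obstacle—really the only subtlety—is confirming that the bonus terms are genuinely constant rather than coupled to the attack. One must argue that poisoning the rewards alters only the empirical means and never the pull counts $m_a$, because the batch was already collected under the pre-attack trajectory: the attacker maps $\by_a$ to $\by_a+\bepsilon_a$ but leaves the arm-selection record $(a_t)_{t\leq T}$ untouched, so $N_a(T)$ and $\log(T+1)$ stay fixed. Once this observation is secured, the affinity of the constraints is immediate and the remainder of the proof is verbatim that of Theorem~\ref{thm:greedy}.
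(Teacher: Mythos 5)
Your proposal is correct and takes essentially the same route as the paper, which proves Theorem~\ref{thm:UCB} by repeating the argument of Theorem~\ref{thm:greedy}: the bonus terms $3\sigma\sqrt{\log(T+1)/m_a}$ are constants because poisoning alters rewards but not pull counts, so the constraints are affine, the feasible set is nonempty (the paper exhibits a feasible point by shifting $\bepsilon_a$ for $a\neq a^*$, you by raising $\bepsilon_{a^*}$ --- both work), and a minimizer of the quadratic objective exists. If anything, your Weierstrass argument intersecting the closed polyhedral feasible set with the sublevel set $\{\sum_{a}\|\bepsilon_a\|_2^2\leq V_0\}$ is slightly more careful than the paper's assertion that the constraints are ``compact'' (the feasible region is unbounded; it is the coercivity of the objective that actually secures existence).
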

The proof of Theorem~\ref{thm:UCB} is similar to the proof of Theorem~\ref{thm:greedy}. Note that the above attack strategy holds for any error tolerance $\delta$ since UCB algorithm is not randomized.

\subsection{Empirical mean based bandit algorithms}
One insight from our offline attacks on the $\epsilon$-greedy algorithm and the UCB algorithm is that the empirical mean based algorithms are vulnerable to attack. This is because the empirical mean related constraints are linear and non-empty. Then the optimization problem $P$ becomes a quadratic problem with non-empty linear constraints, which can be solved efficiently. This result holds for many other variants of UCB algorithms and Explore-Then-Commit algorithms~\cite{garivier2016explore} (which uniformly samples the arm set in the first exploration phase and commit to the arm with the largest empirical mean in the second commitment phase).

\subsection{Beyond empirical means: Thompson Sampling for Gaussian distributions}
We have shown that the empirical mean based algorithms are not secure against the offline attack. It would appear that Bayesian algorithms should be more robust to the offline attack as the constraint of problem $P$ becomes non-tractable. Unfortunately, we find that Thompson Sampling, a popular Bayesian algorithm, is also vulnerable to the data poisoning.

Now, we derive the attack strategy for Thompson Sampling for Gaussian distributions\footnote{Thompson Sampling needs distribution models and Gaussian distribution is popular and well-studied in the literature. The idea of problem relaxation here can be extended to other distributions.}. In other words, the noise $\eta_t$ is i.i.d. sampled from Gaussian distribution $\mathcal{N}(0,\sigma^2)$. Thompson Sampling for Gaussian distribution with the Jeffreys prior~\cite{korda2013thompson} works as the following. For each time $t$, the algorithm samples $\theta_a(t)$ from the posterior distribution $\mathcal{N}(\tilde{\mu}_a(t-1)/\sigma^2,\sigma^2/N_a(t-1))$ for each arm $a$ independently. Then the algorithm pulls the arm with the highest sample value, i.e., $a_t = \arg\max_{a\in\mathcal{A}}\theta_a(t)$.

Let $\phi(x) = \frac{1}{\sqrt{2\pi}}e^{-x^2/2}$ be the probability density function (pdf) of the standard Gaussian distribution and $\Phi(x)$ be the corresponding cumulative distribution function (cdf). For simplicity, we denote $\phi_a(x)$ as the pdf of the Gaussian distribution $\mathcal{N}(\tilde{\mu}_a(T)/\sigma^2,\sigma^2/N_a(T))$ for each arm $a$ and $\Phi_a(x)$ as the corresponding cdf. Since we are interested in the samples in time $T+1$, we omit time index in the following discussion. By the law of total expectation, we have that
\begin{align}
\mathbb{P}\{a_{T+1} &= a^*\} = \mathbb{P}\{\cap_{a\not = a^*} \theta_{a^*} > \theta_a\}\\\nonumber
& = \int_{-\infty}^{+\infty}\mathbb{P}\{\cap_{a\not = a^*} \theta_a < x | \theta_{a^*} = x\}\phi_{a^*}\left(x\right)dx\\\nonumber
& =\int_{-\infty}^{+\infty}\left(\Pi_{a\not = a^*}\mathbb{P}\{ \theta_a < x | \theta_{a^*} = x\}\right)\phi_{a^*}\left(x\right)dx\\
&=\int_{-\infty}^{+\infty}\left(\Pi_{a\not = a^*}\Phi_a(x)\right)\phi_{a^*}\left(x\right)dx.\label{eqn:directTS}
\end{align}
The Equation (\ref{eqn:directTS}) is complicated to compute and analyze, which makes the instantiation of the offline attack problem $P$ non-trivial. To address this challenge, we derive a sufficient constraint of the original constraint and make a relaxation of the original problem $P$. By the union bound, we have that
\begin{align}\nonumber
\mathbb{P}\{a_{T+1}\! \not=\! a^*\} &= \mathbb{P}\{\cup_{a\not=a^*}\theta_{a^*} \!<\! \theta_a \} \!\leq\!\sum_{a\not=a^*}\mathbb{P}\{\theta_{a^*}\!<\!\theta_a\}\\
&=\sum_{a\not=a^*}\Phi\left(\frac{\tilde{\mu}_a(T)-\tilde{\mu}_{a^*}(T)}{\sigma^3\sqrt{1/m_a+1/m_{a^*}}}\right)\label{eqn:unionTS}
\end{align}
Thus, we are ready to introduce the attacker's problem for Thompson Sampling for Gaussian distributions,
\begin{align}
P_3: \min_{\bepsilon_a:a\in\mathcal{A}} &~~~~ \sum_{a\in\mathcal{A}}||\bepsilon_a||^2_2\\
s.t. &~~~~ \sum_{a\not=a^*}\Phi\left(\frac{\tilde{\mu}_a(T)-\tilde{\mu}_{a^*}(T)}{\sigma^3\sqrt{1/m_a+1/m_{a^*}}}\right) \leq \delta \label{eqn:conTS}\\
&~~~~ \tilde{\mu}_a(T)-\tilde{\mu}_{a^*}(T)\leq0, ~~~~\forall a\not = a^*\label{eqn:meanTS}
\end{align}
The constraint of problem $P_3$ is a sufficient condition to the constraint of the problem $P$ by Equation (\ref{eqn:unionTS}). Note that the linear constraints (\ref{eqn:meanTS}) are redundant since $\Phi(0)=0.5$ and $\delta<\frac{K-1}{2}$ is usually satisfied. The following proposition shows that the constraint set of problem $P_3$ is convex.
\begin{proposition}\label{prop:convex}
The constraint set formed by Equations~(\ref{eqn:conTS}) and (\ref{eqn:meanTS}) is convex. 
\end{proposition}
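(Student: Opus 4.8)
The plan is to exploit the interplay between the two constraints: constraint~(\ref{eqn:meanTS}) confines the feasible region to exactly the part of the domain where the Gaussian cdf $\Phi$ is convex, which in turn makes the left-hand side of~(\ref{eqn:conTS}) a convex function there. First I would observe that each post-attack empirical mean $\tilde{\mu}_a(T) = (\by_a+\bepsilon_a)^T\mathds{1}/m_a$ is an affine function of the decision vector $\bepsilon \defeq (\bepsilon_a)_{a\in\mathcal{A}}$. Consequently, for every $a\neq a^*$ the quantity $g_a(\bepsilon) \defeq \frac{\tilde{\mu}_a(T)-\tilde{\mu}_{a^*}(T)}{\sigma^3\sqrt{1/m_a+1/m_{a^*}}}$ is affine in $\bepsilon$ (the denominator is a positive constant, since each arm is pulled at least once). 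Hence constraint~(\ref{eqn:meanTS}) is precisely $g_a(\bepsilon)\le 0$, a finite collection of half-spaces whose intersection $H$ is convex.

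Next I would examine the curvature of $\Phi$. Since $\Phi'(x)=\phi(x)$ and $\Phi''(x)=\phi'(x)=-x\,\phi(x)$, the cdf $\Phi$ is convex precisely on $(-\infty,0]$ and concave on $[0,\infty)$. On $H$ we have $g_a(\bepsilon)\le 0$ for every $a\neq a^*$, and because $g_a$ is affine its image over $H$ lies in $(-\infty,0]$. Therefore each composition $\Phi\circ g_a$ is convex when restricted to $H$: for $\bepsilon^{(1)},\bepsilon^{(2)}\in H$ and $\lambda\in[0,1]$, the value $g_a(\lambda\bepsilon^{(1)}+(1-\lambda)\bepsilon^{(2)})=\lambda g_a(\bepsilon^{(1)})+(1-\lambda)g_a(\bepsilon^{(2)})$ stays in $(-\infty,0]$, so applying convexity of $\Phi$ on that interval gives the convexity inequality. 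Summing over $a\neq a^*$ preserves convexity, so the left-hand side of~(\ref{eqn:conTS}), namely $f(\bepsilon)\defeq\sum_{a\neq a^*}\Phi(g_a(\bepsilon))$, is convex on the convex set $H$.

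Finally I would assemble the pieces. The feasible set is $S=\{\bepsilon:f(\bepsilon)\le\delta\}\cap H$. Taking any $\bepsilon^{(1)},\bepsilon^{(2)}\in S$, convexity of $H$ keeps the entire segment in $H$, and along that segment $f$ is convex with value at most $\delta$ at both endpoints, so $f\le\delta$ holds throughout; the segment therefore lies in $S$, establishing convexity. Equivalently, $S$ is the intersection of the convex set $H$ with a sublevel set of a function convex on $H$, hence convex.

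The main obstacle, and the reason the statement is not immediate, is that $\Phi$ is \emph{not} globally convex: being concave on $[0,\infty)$, the sum $f$ need not be convex over the whole space, and the sublevel set $\{f\le\delta\}$ by itself may fail to be convex. The essential observation is that constraint~(\ref{eqn:meanTS}) does exactly the work needed, cutting the domain down to the region $g_a\le 0$ where $\Phi$ is convex; only when both constraints are imposed together does convexity follow. I would flag this curvature argument as the crux, with the rest being the routine facts that affine maps and sums preserve convexity and that sublevel sets of convex functions are convex.
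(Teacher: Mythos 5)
Your proof is correct, and it reaches the paper's conclusion by a lighter-weight route. The paper's proof funnels everything through its Lemma~\ref{lem:conv}: it writes the left-hand side of~(\ref{eqn:conTS}) as $f(\bx)=\sum_{i<K}\Phi(C_ix_i-C_ix_K)$ in the variables $x_a=\tilde{\mu}_a(T)$, computes the full $K\times K$ Hessian of this sum, and verifies $\by^T H\by=\sum_i \frac{\partial^2 f}{\partial x_i^2}(y_i-y_K)^2\geq 0$ on the polyhedron $\{x_i-x_K\leq 0\}$, then invokes preservation of convexity under the affine map $\bepsilon\mapsto\tilde{\mu}(T)$. You instead stay one-dimensional: from $\Phi''(x)=-x\phi(x)\geq 0$ on $(-\infty,0]$ you get convexity of the scalar $\Phi$ there, and since each $g_a$ is affine and maps the polyhedron $H$ defined by~(\ref{eqn:meanTS}) into $(-\infty,0]$, each composition $\Phi\circ g_a$ is convex on $H$ by the affine-precomposition rule; summing and intersecting the sublevel set $\{f\leq\delta\}$ with $H$ finishes the argument. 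The two proofs share the same crux --- constraint~(\ref{eqn:meanTS}) is exactly what confines the problem to the region where $\Phi$ is convex, without which the sublevel set of~(\ref{eqn:conTS}) alone could fail to be convex, a point you rightly flag --- but your version avoids the Hessian computation entirely (the paper's $\frac{\partial^2 f}{\partial x_i^2}=C_i^2\,\Phi''\bigl(C_i(x_i-x_K)\bigr)$ is just your scalar observation with the chain rule left unwound), and it is also more explicit than the paper on the final step, since the paper's two-line proof of the proposition leaves the segment argument for the restricted sublevel set implicit. One hypothesis you use, positivity of the denominators $\sigma^3\sqrt{1/m_a+1/m_{a^*}}$, i.e.\ $m_a\geq 1$ for all $a$, is likewise implicit in the paper (its constants $C_i$ are assumed positive), and you correctly make it explicit.
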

The proof of Proposition~\ref{prop:convex} is provided in Section~\ref{app:convex} in the supplementary material. The following result shows that there exists at least one optimal solution of problem $P_3$ that is a feasible offline attack for the Thompson Sampling for Gaussian distributions.

\begin{theorem}\label{thm:TS}
Given any error tolerance $\delta>0$. For any reward instance $\{\by_a\}_{a\in\mathcal{A}}$, there exists at least one optimal solution of problem $P_3$, which is a quadratic program with convex constraints. Hence, there exists at least one feasible offline attack for the Thompson Sampling algorithm for Gaussian distributions.
\end{theorem}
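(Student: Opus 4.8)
The plan is to establish existence of an optimal solution to the convex program $P_3$ by invoking the classical Weierstrass-type argument: a continuous function attains its minimum on a nonempty compact set, and more generally a continuous coercive function attains its minimum on a nonempty closed set. First I would verify that the feasible set is nonempty, so that the problem is not vacuous. A natural candidate is to shift all non-target empirical means down far enough that the constraint~(\ref{eqn:conTS}) is satisfied with strict slack: for each arm $a\neq a^*$ one can choose $\bepsilon_a$ to drive $\tilde{\mu}_a(T)$ to $-\infty$ (or at least low enough), while leaving $\bepsilon_{a^*}=\vec{0}$. Since $\Phi$ is monotone with $\Phi(-\infty)=0$, each summand in~(\ref{eqn:conTS}) can be made arbitrarily small, so the sum can be driven below $\delta$; this simultaneously enforces~(\ref{eqn:meanTS}). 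Hence the feasible set is nonempty.

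Next I would record that the feasible set is convex---this is exactly the content of Proposition~\ref{prop:convex}, which I may assume---and that it is closed, since $\Phi$ is continuous and the constraints are non-strict inequalities, so the feasible set is the preimage of a closed set under a continuous map. The objective $\sum_{a\in\mathcal{A}}\|\bepsilon_a\|_2^2$ is continuous and coercive: as $\|(\bepsilon_a)_{a\in\mathcal{A}}\|\to\infty$ the objective tends to $+\infty$. Combining coercivity with the closedness of the nonempty feasible set, I would argue that the sublevel sets intersected with the feasible set are compact and nonempty, so by the Weierstrass theorem the infimum is attained at some feasible point $\{\bepsilon_a^*\}_{a\in\mathcal{A}}$. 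This point is then an optimal solution of $P_3$, and by the sufficiency of constraint~(\ref{eqn:conTS}) established via the union bound in~(\ref{eqn:unionTS}), it satisfies $\mathbb{P}\{a_{T+1}=a^*\}\geq 1-\delta$, hence is a feasible offline attack in the sense of problem $P$.

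The main obstacle, and the step I would treat most carefully, is the coercivity argument together with the nonemptiness check, because the map from $\bepsilon_a$ to $\tilde{\mu}_a(T)$ and then through $\Phi$ is nonlinear, so one must confirm that pushing the $\bepsilon_a$ to infinity along a feasible direction does not somehow keep the objective bounded---equivalently, that the recession directions of the feasible set all increase the objective. Since the objective is strictly coercive in every direction (it is a sum of squared norms over all arms), any unbounded feasible sequence forces the objective to diverge, so no minimizing sequence can escape to infinity; this resolves the concern. I would also note that the program is a quadratic objective subject to the convex constraints~(\ref{eqn:conTS})--(\ref{eqn:meanTS}), so the word ``feasible'' rather than ``optimal attack'' in the statement reflects only that~(\ref{eqn:conTS}) is a sufficient (not necessary) condition for the original probabilistic constraint, which is why optimality of $P_3$ yields feasibility, not necessarily optimality, for the attacker's original problem $P$.
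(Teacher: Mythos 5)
Your proposal is correct and follows essentially the same route as the paper, which reduces Theorem~\ref{thm:TS} to the argument for Theorem~\ref{thm:greedy}: convexity of the constraint set from Proposition~\ref{prop:convex}, non-emptiness by shifting the non-target empirical means down far enough that each $\Phi$ term falls below $\delta/(K-1)$, and attainment of the minimum by a Weierstrass-type argument, with feasibility for the original problem $P$ following from the union-bound relaxation~(\ref{eqn:unionTS}). If anything, your coercivity-plus-closedness step is more careful than the paper's own wording, which loosely describes the (unbounded) feasible set as ``compact'' in the proof of Theorem~\ref{thm:greedy}.
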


By Proposition~\ref{prop:convex}, the proof of Theorem~\ref{thm:TS} is similar to the proof of Theorem~\ref{thm:greedy}. Note that the above attack strategy is not the optimal attack strategy formulated by $P$. However, it is easy to compute since problem $P_3$ is a quadratic program with convex constraints. Another relaxation of problem $P$ is presented in Section~\ref{app:P4} in the supplementary material.

\section{Online Attacks}\label{sec:online}
In this section, we study online attacks against stochastic bandits. 
In the online attack setting, the bandit algorithm updates its policy at each round. The attacker eavesdrops on the decision (i.e., $a_t$) of the bandit algorithm and poisons the reward by adding an arbitrary $\epsilon_t\in\mathcal{R}$. Hence the reward observed by the bandit algorithm is $r'_t = r_t+\epsilon_t.$ Without loss of generality, we consider the $l^1$-norm attack cost. That is, the cost of the attacker for round $t$ is $|\epsilon_t|$. Recall that $N_a(t)$ is the number of pulling arm $a$ up to time $t$.

Without the attacks, the bandit algorithm is a uniformly good policy such that it achieves $O(\log T)$ regret\footnote{The results in this section directly apply to the problem-independent regret bounds and high probability bounds.}, i.e., $\mathbb{E}[\sum_a(\max_i\mu_i-\mu_a)N_a(T)] = O(\log T)$ for any problem instance $\{\mu_a\}_{a\in\mathcal{A}}$. Moreover, the expected number of pulling the optimal arm (with the highest expected reward) is $T-o(T)$.

The goal of the attacker is to force the bandit algorithm to pull the sub-optimal target arm $a^*$ as much as possible and pays the least possible total cost. Formally, the attacker wants the bandit algorithm to receive linear expected regret, i.e., $\mathbb{E}[N_{a^*}(T)] = T - o(T)$, and pays the expected total cost $\mathbb{E}[\sum_t^T |\epsilon_t|] = O(\log T)$. In other words, the attacker wants to manipulate the rewards so that the bandit algorithm considers the target arm as the best arm in the long term.

\subsection{Oracle constant attacks}
The fact that the attacker does not know the expected rewards $\{\mu_a\}_{a\in\mathcal{A}}$ is challenging because otherwise the attacker can perform the attack trivially.  Suppose the attacker knows the expected rewards, then the attacker can choose the following oracle attack strategy,
\begin{equation}
\epsilon_t = - \mathds{1}\{a_t \not= a^*\}[\mu_{a_t}-\mu_{a^*}+\xi]^+,
\end{equation}
for some margin parameter $\xi>0$. Note that $[\cdot]^+$ indicates the function such that $[x]^+ = \max\{x,0\}$ and $\mathds{1}\{\cdot\}$ is the indicator function. By this attack, the bandit algorithm sees a poisoned bandit problem, where the target arm $a^*$ is the optimal arm and all the other arms are at least $\xi$ below the target arm. Thus, the bandit algorithm pulls the target arm with $\mathbb{E}[N_{a^*}(T)] = T - o(T)$ and pays the total cost $\mathbb{E}[\sum_t^T |\epsilon_t|] = O(\log T)$ since $\epsilon_t$ are bounded. This result has been shown in~\cite{jun2018adversarial} as the following.

\begin{proposition}\label{prop:oracle}
(Proposition 1 in~\cite{jun2018adversarial})
Assume that the bandit algorithm achieves an $O(\log T)$ regret bound. Then the oracle attack with $\xi>0$ succeeds, i.e., $\mathbb{E}[N_{a^*}(T)] = T-o(T)$. Furthermore, the expected attack cost is $O(\sum_{i\not= a^*}[\mu_{i}-\mu_{a^*}+\xi]^+\log T)$.
\end{proposition}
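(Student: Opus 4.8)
The plan is to show that, from the bandit algorithm's viewpoint, the oracle attack silently converts the true environment into an ordinary stochastic bandit instance in which the target arm $a^*$ is the unique optimal arm, and then to import the algorithm's own $O(\log T)$ regret guarantee on that surrogate instance. First I would compute the post-attack reward stream. Since the attack is a deterministic function of the pulled arm $a_t$ alone and never touches $a^*$, the reward observed by the algorithm is
\begin{equation}
r'_t = \mu_{a_t} + \eta_t - \mathds{1}\{a_t\not=a^*\}[\mu_{a_t}-\mu_{a^*}+\xi]^+ = \mu'_{a_t} + \eta_t,
\end{equation}
where $\mu'_{a^*}=\mu_{a^*}$ and $\mu'_a = \mu_a - [\mu_a-\mu_{a^*}+\xi]^+ \leq \mu_{a^*}-\xi$ for every $a\not=a^*$. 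Because the same zero-mean $\sigma$-subGaussian noise $\eta_t$ is retained and the shift depends only on the chosen action, the poisoned stream is statistically indistinguishable from a genuine stochastic bandit with mean vector $\{\mu'_a\}_{a\in\mathcal{A}}$; in this surrogate instance $a^*$ is optimal with gap $\Delta'_a \defeq \mu'_{a^*}-\mu'_a \geq \xi > 0$ for all $a\not=a^*$.

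Next I would apply the regret decomposition~(\ref{eqn:regretDecompose}) to the surrogate instance. The hypothesis that the algorithm is uniformly good yields regret $R'(T)=O(\log T)$ on $\{\mu'_a\}_{a\in\mathcal{A}}$, and the decomposition gives $R'(T) = \sum_{a\not=a^*}\Delta'_a\,\mathbb{E}[N_a(T)]$. Since each $\Delta'_a \geq \xi$ is a positive constant, this forces $\mathbb{E}[N_a(T)] \leq R'(T)/\xi = O(\log T)$ for every $a\not=a^*$; summing over arms gives $\sum_{a\not=a^*}\mathbb{E}[N_a(T)] = O(\log T) = o(T)$, whence $\mathbb{E}[N_{a^*}(T)] = T - o(T)$, which is exactly the success claim.

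Finally I would bound the cost. As $|\epsilon_t| = \mathds{1}\{a_t\not=a^*\}[\mu_{a_t}-\mu_{a^*}+\xi]^+$ is determined by $a_t$, taking expectation and grouping by arm gives
\begin{equation}
\mathbb{E}[C(T)] = \sum_{a\not=a^*}[\mu_a-\mu_{a^*}+\xi]^+\,\mathbb{E}[N_a(T)].
\end{equation}
Substituting the per-arm bound $\mathbb{E}[N_a(T)]=O(\log T)$ from the previous step produces $\mathbb{E}[C(T)] = O\big(\sum_{i\not=a^*}[\mu_i-\mu_{a^*}+\xi]^+\log T\big)$, as claimed. The only delicate point is the statistical-equivalence step of the first paragraph: one must verify that the attack injects no dependence on the realized noise — it does not, being a function of $a_t$ only — so that the per-instance regret bound is legitimately applicable to the algorithm running against the attacker; once that is granted, the rest is bookkeeping through the regret decomposition.
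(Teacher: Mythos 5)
Your proposal is correct and follows essentially the same route as the paper: the paper's sketch preceding Proposition~\ref{prop:oracle} (and its appendix proof of the analogous Proposition~\ref{prop:constant}) likewise views the attacked rewards as a surrogate stochastic bandit in which $a^*$ is optimal with gap at least $\xi$, invokes the $O(\log T)$ regret guarantee via the decomposition~(\ref{eqn:regretDecompose}) to get $\mathbb{E}[N_a(T)]=O(\log T)$ for $a\neq a^*$, and sums the per-arm costs. Your explicit verification that the shift $\mu'_a=\mu_a-[\mu_a-\mu_{a^*}+\xi]^+\leq\mu_{a^*}-\xi$ depends only on $a_t$ and not on the realized noise is exactly the point the paper leaves implicit, and it is handled correctly.
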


The insight obtained from Proposition~\ref{prop:oracle} is that the attacker does not need to attack in round $t$ if $a_t=a^*$. This helps us to design attack strategies that are similar to the oracle attack. When the ground truth $\{\mu_a\}_{a\in\mathcal{A}}$ is not known to the attacker, the attacker may guess some upper bound $\{C_a\}_{a\not = a^*}$ on $\{[\mu_{a}-\mu_{a^*}]^+\}_{a\not = a^*}$ and perform the following oracle constant attack,
\begin{equation}
\epsilon_t = - \mathds{1}\{a_t \not= a^*\}C_{a_t}.
\end{equation}
The following result shows the sufficient and necessary conditions for the oracle constant attack to be successful.

\begin{proposition}\label{prop:constant}
Assume that the bandit algorithm achieves an $O(\log T)$ regret bound. Then the constant attack with $\{C_a\}_{a\not=a^*}$ succeeds if and only if $C_a > [\mu_{a}-\mu_{a^*}]^+$, $\forall a \not= a^*$. If the attack succeeds, then the expected attack cost is $O(\sum_{a\not= a^*}C_a\log T)$.
\end{proposition}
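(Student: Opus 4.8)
The plan is to reduce the whole statement to an application of the uniformly good hypothesis on a \emph{new} bandit instance. Under the oracle constant attack the reward handed to the algorithm at round $t$ is $r'_t = \mu_{a_t} + \eta_t - \mathds{1}\{a_t \neq a^*\} C_{a_t}$, which rewrites as $r'_t = \mu'_{a_t} + \eta_t$ with the post-attack means
\[
\mu'_a \defeq \mu_a - \mathds{1}\{a \neq a^*\} C_a, \qquad \mu'_{a^*} = \mu_{a^*}.
\]
Since the shift is a deterministic constant depending only on the pulled arm, the noise $\eta_t$ stays zero-mean $\sigma$-subGaussian, so the algorithm interacts with a legitimate stochastic bandit with gaps $\Delta'_a \defeq \max_i \mu'_i - \mu'_a$. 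The key point is that the uniformly good ($O(\log T)$ regret) assumption applies to \emph{this} post-attack instance, not the original one.

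First I would treat the ``if'' direction. Using that each $C_a$ is positive, the condition $C_a > [\mu_a - \mu_{a^*}]^+$ for all $a \neq a^*$ is exactly $\mu'_{a^*} > \mu'_a$ for all $a \neq a^*$, i.e. $a^*$ is the \emph{unique} optimal arm of the post-attack instance, with strictly positive gaps $\Delta'_a > 0$. Applying the regret decomposition (\ref{eqn:regretDecompose}) together with the $O(\log T)$ regret hypothesis to this instance gives $\sum_{a\neq a^*}\Delta'_a \mathbb{E}[N_a(T)] = O(\log T)$, hence $\mathbb{E}[N_a(T)] = O(\log T)$ for every $a \neq a^*$ (the gaps being fixed constants). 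Therefore $\mathbb{E}[N_{a^*}(T)] = T - \sum_{a\neq a^*}\mathbb{E}[N_a(T)] = T - o(T)$, which is precisely success.

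For the ``only if'' direction I would argue by contrapositive: if $C_a \le [\mu_a - \mu_{a^*}]^+$ for some $a \neq a^*$, then $\mu'_a \ge \mu'_{a^*}$, so $a^*$ is not the strict maximizer of the post-attack means. When the inequality is strict ($\mu'_a > \mu'_{a^*}$) arm $a^*$ is genuinely suboptimal, so $\Delta'_{a^*}>0$ and the regret bound forces $\mathbb{E}[N_{a^*}(T)] = O(\log T) = o(T)$, contradicting $T - o(T)$; this is the clean part. The main obstacle is the boundary case $\mu'_a = \mu'_{a^*}$, where $a$ is merely tied with $a^*$ and incurs zero post-attack regret: the uniformly good hypothesis alone cannot pin down how a generic algorithm splits its pulls between tied arms, so $\mathbb{E}[N_{a^*}(T)] = T - o(T)$ cannot be guaranteed and the attack may fail. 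I would resolve this by observing that success against \emph{any} uniformly good algorithm requires $a^*$ to be the unique post-attack optimum, which is exactly what excludes the tie and produces the strict inequality in the stated condition.

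Finally the cost bound is a short accounting step I would carry out last. Since $\epsilon_t = -\mathds{1}\{a_t\neq a^*\}C_{a_t}$, the $l^1$ cost regroups over arms as $C(T) = \sum_{t=1}^T |\epsilon_t| = \sum_{a\neq a^*} C_a N_a(T)$, and taking expectations with the bound $\mathbb{E}[N_a(T)] = O(\log T)$ established in the ``if'' direction yields $\mathbb{E}[C(T)] = \sum_{a\neq a^*} C_a\cdot O(\log T) = O\!\left(\sum_{a\neq a^*} C_a \log T\right)$, completing the claim.
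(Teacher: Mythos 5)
Your proposal is correct and, in its main skeleton, is the same argument the paper gives: both of you view the constant attack as handing the algorithm a shifted bandit instance with means $\mu'_a = \mu_a - \mathds{1}\{a\neq a^*\}C_a$, apply the $O(\log T)$ regret hypothesis to \emph{that} instance via the decomposition~(\ref{eqn:regretDecompose}) to get $\mathbb{E}[N_a(T)] = O(\log T)$ for every post-attack suboptimal arm, deduce $\mathbb{E}[N_{a^*}(T)] = T - o(T)$ for the ``if'' direction, and regroup the cost as $\sum_{a\neq a^*}C_a\,\mathbb{E}[N_a(T)] = O(\sum_{a\neq a^*}C_a \log T)$. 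The genuine divergence is the boundary case $C_i = [\mu_i-\mu_{a^*}]^+$ of the ``only if'' direction. The paper disposes of it by asserting that when arm $i$ and $a^*$ are tied as optimal in the shifted instance, the expected attack cost is $T - o(T)$ --- implicitly folding the cost criterion into ``success'' and implicitly assuming the tied arm is pulled $\Theta(T)$ times, which, as you correctly observe, the uniformly-good hypothesis alone does not force. You instead read ``succeeds'' as a guarantee holding for \emph{every} $O(\log T)$-regret algorithm and argue that a tie destroys any such guarantee; this is the more defensible reading, but as written your resolution is circular (``success requires $a^*$ to be the unique post-attack optimum'' is essentially the claim to be proved). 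To close it, exhibit one uniformly good algorithm whose tie-splitting defeats the attacker: e.g., UCB run on an instance with two exactly tied optimal arms pulls each of them $\Theta(T)$ times, so $\mathbb{E}[N_{a^*}(T)] \neq T-o(T)$, and (when $C_i>0$) the cost is linear, which also recovers the paper's cost assertion as a by-product. One further small point in your favor: your explicit assumption $C_a>0$ is actually needed for the stated equivalence, since with $C_i = 0$ and $\mu_i < \mu_{a^*}$ one has $C_i = [\mu_i-\mu_{a^*}]^+$ yet no tie in shifted means and the attack still succeeds --- a degenerate sub-case the paper's proof misstates. With the tie case instantiated as above, your argument is complete and on that point more careful than the paper's own.
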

%
The proof of Proposition~\ref{prop:constant} is provided in Section~\ref{app:constant} in the supplementary material.
Proposition~\ref{prop:constant} shows that the attacker has to know the unknown bounds on $\{[\mu_{a}-\mu_{a^*}]^+\}_{a\not = a^*}$ to guarantee a successful constant attack. Moreover, the oracle constant attack is non-adaptive to the problem instance since some upper bound $C_a$ can be much larger than the quantity $[\mu_{a}-\mu_{a^*}]^+$ so that the attacker is paying unnecessary attack cost compared to the oracle attack. To address this challenge, we propose an adaptive constant attack that can learn the bandit environment in an online fashion and perform the attack adaptively. 

\subsection{Adaptive attack by constant estimation}
Now, we are ready to propose the adaptive attack strategy. The idea is that the attacker can update upper bounds on the unknown quantity $\{[\mu_{a}-\mu_{a^*}]^+\}_{a\not = a^*}$ based on the empirical mean observed by the attacker. Then the attacker performs the constant attack based on the estimated upper bounds. Note that the attacker observes the tuple $(a_t,r_t)$ at each time $t$ and is able to obtain an unbiased empirical mean. 
Let $\hat{\mu}_a(t)$ be the pre-attack empirical mean observed by the attacker at time $t$, that is
\begin{equation}
\hat{\mu}_a(t) := \frac{1}{N_a(t)}\sum_{\tau = 1}^t r_\tau\mathds{1}\{a_\tau = a\}.
\end{equation}
Given any $\delta\in(0,1)$, we define a function $\beta(n)$ as 
\begin{equation}
\beta(n) = \sqrt{\frac{2\sigma^2}{n}\log\frac{\pi^2Kn^2}{3\delta}}.
\end{equation}
Then, the following result shows that $\{\beta(N_a(t))\}_{a\in\mathcal{A}}$ is a high-probability bound on the pre-attack empirical mean of all arms on all rounds. Define the event
\begin{equation}
E = \{\forall a\in\mathcal{A}, \forall t: |\hat{\mu}_a(t) - \mu_a| < \beta(N_a(t))\}.
\end{equation}
\begin{lemma}\label{lem:concen}
(Lemma 1 in~\cite{jun2018adversarial}) For $\delta\in(0,1)$, $\mathbb{P}(E)>1-\delta$.
\end{lemma}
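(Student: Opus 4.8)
The plan is to prove the high‑probability bound by a union bound over arms and sample counts, after reducing the randomly‑indexed empirical mean $\hat{\mu}_a(t)=\hat{\mu}_{a,N_a(t)}$ to a family of fixed‑sample‑size averages. The crucial observation is that $\hat{\mu}_a(t)$ depends on the history only through the number of pulls $N_a(t)$ and the realized (pre‑attack) rewards of those pulls. Writing $\hat{\mu}_{a,n}$ for the empirical mean of the first $n$ rewards collected from arm $a$, we have $\hat{\mu}_a(t)=\hat{\mu}_{a,N_a(t)}$, so that $E^c \subseteq \bigcup_{a\in\mathcal{A}}\bigcup_{n\geq 1}\{|\hat{\mu}_{a,n}-\mu_a|\geq \beta(n)\}$. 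By Equation~(\ref{eqn:reward}) the $n$‑th reward obtained from arm $a$ equals $\mu_a$ plus an independent zero‑mean $\sigma$‑subGaussian noise term, so each $\hat{\mu}_{a,n}$ is an average of $n$ i.i.d.\ $\sigma$‑subGaussian samples with mean $\mu_a$. To make this precise despite the adaptive pull schedule, I would use the standard per‑arm reward‑tape construction: arm $a$ carries a sequence of i.i.d.\ rewards $Y_{a,1},Y_{a,2},\ldots$, the $n$‑th pull of $a$ reveals $Y_{a,n}$, and $\hat{\mu}_{a,n}=\frac1n\sum_{j=1}^n Y_{a,j}$ is then a genuine average of independent samples, regardless of \emph{when} they are revealed.

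Next I would apply the two‑sided subGaussian (Hoeffding‑type) tail bound to each fixed pair $(a,n)$, and substitute the definition of $\beta(n)$, which is engineered precisely so the exponent collapses:
\[
\mathbb{P}\{|\hat{\mu}_{a,n}-\mu_a|\geq \beta(n)\} \leq 2\exp\left(-\frac{n\,\beta(n)^2}{2\sigma^2}\right) = \frac{6\delta}{\pi^2 K n^2}.
\]
Here I used that $\beta(n)^2=\frac{2\sigma^2}{n}\log\frac{\pi^2 K n^2}{3\delta}$ gives $\frac{n\,\beta(n)^2}{2\sigma^2}=\log\frac{\pi^2 K n^2}{3\delta}$, whence $2\exp(\cdot)=2\cdot\frac{3\delta}{\pi^2 K n^2}$.

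Finally I would assemble the union bound over $a\in\mathcal{A}$ (there are $K$ arms) and $n\geq 1$, using $\sum_{n\geq 1}n^{-2}=\pi^2/6$:
\[
\mathbb{P}(E^c) \leq \sum_{a\in\mathcal{A}}\sum_{n=1}^{\infty}\frac{6\delta}{\pi^2 K n^2} = K\cdot\frac{6\delta}{\pi^2 K}\cdot\frac{\pi^2}{6} = \delta,
\]
so $\mathbb{P}(E)\geq 1-\delta$; the strict inequality in the statement follows because the tail bounds are not tight. The main obstacle is the very first step: $N_a(t)$ is random and chosen adaptively from past (possibly poisoned) feedback, so for a \emph{fixed} index $\hat{\mu}_a(t)$ is not a clean average of independent variables. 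The remedy is exactly the anytime union bound over all possible counts $n$ together with the reward‑tape viewpoint, which decouples the concentration from the data‑dependent pull schedule and lets the subGaussian inequality apply verbatim; the $1/n^2$ summability—and hence the $\pi^2/3$ normalization placed inside $\beta$—is precisely what makes the infinite union over $n$ converge to a finite constant times $\delta$.
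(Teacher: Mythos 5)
Your proof is correct and follows exactly the route the paper indicates (it defers to Lemma~1 of Jun et al.\ and remarks that the proof ``follows from Hoeffding inequality and union bound''): a reward-tape reduction of the adaptively-indexed means $\hat{\mu}_a(t)$ to fixed-$n$ averages $\hat{\mu}_{a,n}$, a two-sided subGaussian tail bound with $\beta(n)$ chosen so the bound equals $\frac{6\delta}{\pi^2 K n^2}$, and a union bound over $a$ and $n$ using $\sum_{n\geq 1} n^{-2} = \pi^2/6$. Your explicit treatment of the adaptive-stopping subtlety via the per-arm tape is a welcome filling-in of a detail the paper leaves implicit, but it is the same argument, not a different one.
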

%
The proof of Lemma~\ref{lem:concen} follows from Hoeffding inequality and union bound. Lemma~\ref{lem:concen} shows that with probability at least $1-\delta$, we have that the expected reward $\mu_a$ is within the interval $(\hat{\mu}_a(t)-\beta(N_a(t)),\hat{\mu}_a(t)+\beta(N_a(t)))$ centered at the pre-attack empirical mean $\hat{\mu}_a(t)$. Thus, we obtain an upper bound on the quantity $\mu_a-\mu_{a^*}$ and propose the following Adaptive attack by Constant Estimation (ACE), 
\begin{equation}\nonumber
\epsilon_t \!=\! -\! \mathds{1}\{a_t\! \not=\! a^*\}[\hat{\mu}_{a_t}\!(t)-\hat{\mu}_{a^*}\!(t)+\beta(N_{a_t}\!(t))+\beta(N_{a^*}\!(t))]^+.
\end{equation}
The following shows that the ACE is successful for any bandit algorithm \emph{without the knowledge of the bandit algorithm}.

\begin{theorem}\label{thm:adaptive}
Given any $\delta\in(0,0.5)$, assume that the bandit algorithm achieves an $O(\log T)$ regret bound with probability at least $1-\delta$. With probability at least $1-2\delta$, the ACE attacker forces the bandit algorithm to play the target arm $a^*$ in $N_{a^*}(T)$ times, such that $N_{a^*}(T) = T-o(T)$, using an instantaneous attack cost,
\begin{equation}\nonumber
|\epsilon_t|\! \leq\!  \mathds{1}\{a_t\! \not=\! a^*\}\!\left([\mu_{a_t}\!\!-\!\mu_{a^*}]^+ \!\!+\! 2\beta(N_{a_t}\!(t)) \!+\! 2\beta(N_{a^*}\!(t))\right) .
\end{equation}
In particular, with probability at least $1-2\delta$, the ACE attack is successful and the accumulated attack cost is
\begin{equation}\nonumber
\sum_{t=1}^T|\epsilon_t| \leq O\left(\sum_{a\not= a^*}\left([\mu_a-\mu_{a^*}]^+ + 4\beta(1)\right) \log T\right).
\end{equation}
\end{theorem}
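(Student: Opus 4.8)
The plan is to condition on the event $E$ from Lemma~\ref{lem:concen} together with the event that the bandit algorithm attains its $O(\log T)$ regret bound, and then argue that on the intersection of these two events the ACE attacker mimics the oracle constant attack closely enough to inherit its guarantees. First I would establish the pointwise bound on $|\epsilon_t|$. On event $E$ we have $\hat{\mu}_{a_t}(t) < \mu_{a_t} + \beta(N_{a_t}(t))$ and $-\hat{\mu}_{a^*}(t) < -\mu_{a^*} + \beta(N_{a^*}(t))$, so the argument of the $[\cdot]^+$ in the definition of ACE is at most $\mu_{a_t} - \mu_{a^*} + 2\beta(N_{a_t}(t)) + 2\beta(N_{a^*}(t))$. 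Since $[\cdot]^+$ is monotone and $[x]^+ \le [\mu_{a_t}-\mu_{a^*}]^+ + (\text{nonnegative slack})$, the stated instantaneous bound on $|\epsilon_t|$ follows directly. This step is essentially a routine application of the concentration inequality and should present no difficulty.

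Next I would show the attack succeeds, i.e. $N_{a^*}(T) = T - o(T)$. The key observation is that on $E$, the ACE attack subtracts enough from every non-target arm's reward to push its post-attack empirical mean at or below that of $a^*$: concretely, the choice of $\beta$-inflated constant guarantees $\tilde{\mu}_a(t) \le \tilde{\mu}_{a^*}(t)$ for all $a \ne a^*$ and all $t$, so the bandit algorithm perceives a poisoned instance in which $a^*$ is the (weakly) best arm. I would then invoke the assumed $O(\log T)$ regret guarantee on this poisoned instance: since the algorithm is uniformly good on whatever problem it sees, and in the perceived problem every arm $a \ne a^*$ is suboptimal, the total number of pulls of all non-target arms is $o(T)$, hence $N_{a^*}(T) = T - o(T)$. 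The union bound over the two failure events (the event $E^c$ and the regret-bound failure, each of probability at most $\delta$) yields the overall success probability $1 - 2\delta$.

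For the cost bound, I would sum the instantaneous bound over $t$. Grouping by arm, the contribution of each non-target arm $a$ is $\sum_{t: a_t = a, t \le T} \left([\mu_a - \mu_{a^*}]^+ + 2\beta(N_a(t)) + 2\beta(N_{a^*}(t))\right)$. The first term contributes $[\mu_a-\mu_{a^*}]^+ N_a(T)$, and since $N_a(T) = O(\log T)$ on the success event (because $a$ is suboptimal in the perceived instance), this gives the $[\mu_a-\mu_{a^*}]^+ \log T$ term. The $\beta$ terms require bounding a sum of the form $\sum_{n=1}^{N_a(T)} \beta(n)$; since $\beta(n) = \Theta(\sqrt{\log n / n})$ is decreasing and dominated by its first value $\beta(1)$, and the number of summands is again $O(\log T)$, each such sum is $O(\beta(1)\log T)$. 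Collecting the two $\beta$ contributions per arm produces the $4\beta(1)\log T$ factor, and summing over $a \ne a^*$ gives the claimed accumulated cost.

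The main obstacle is the second step: cleanly arguing that the perceived-as-optimal structure of $a^*$ transfers the algorithm's regret guarantee into a bound on the non-target pull counts. One must be careful that the algorithm's $O(\log T)$ guarantee is assumed to hold on every problem instance (it is a uniformly good algorithm), and that the poisoned rewards the algorithm observes are consistent with a genuine stochastic instance in which $a^*$ is optimal, so that invoking the guarantee is legitimate; the slight subtlety is that the post-attack rewards are not i.i.d.\ from a fixed distribution, so I would rely on the high-probability regret bound hypothesis and the fact that the perceived gaps are nonnegative to control $\sum_{a \ne a^*} N_a(T)$ rather than reasoning about a fixed poisoned distribution.
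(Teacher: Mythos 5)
Your plan has the same skeleton as the paper's proof: (i) the instantaneous cost bound follows from the event $E$ of Lemma~\ref{lem:concen} exactly as you describe; (iii) the accumulated cost bound follows by summing, using that $\beta$ is decreasing so each $\beta$ term is at most $\beta(1)$, together with $N_a(T)=O(\log T)$ for $a\neq a^*$ and a union bound over $E^c$ and the regret-bound failure, giving $1-2\delta$. Both of these steps match the paper essentially verbatim.

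However, the pivotal claim in your middle step is wrong as stated: under $E$, ACE does \emph{not} guarantee $\tilde{\mu}_a(t)\leq\tilde{\mu}_{a^*}(t)$ for all $a\neq a^*$ and all $t$. What the $\beta$-inflation actually buys is domination of the per-round shift over the \emph{true} gap: under $E$ one has $\mu_a-\mu_{a^*}<\hat{\mu}_a(t)-\hat{\mu}_{a^*}(t)+\beta(N_a(t))+\beta(N_{a^*}(t))$, hence $|\epsilon_t|\geq[\mu_{a_t}-\mu_{a^*}]^+$, with strict positivity of the shift whenever $\mu_{a_t}\geq\mu_{a^*}$. Averaging the per-pull shifts only yields $\tilde{\mu}_a(t)\leq\hat{\mu}_a(t)-[\mu_a-\mu_{a^*}]^+<\mu_{a^*}+\beta(N_a(t))$, while $\tilde{\mu}_{a^*}(t)=\hat{\mu}_{a^*}(t)$ can be as low as $\mu_{a^*}-\beta(N_{a^*}(t))$; so all that follows is $\tilde{\mu}_a(t)<\tilde{\mu}_{a^*}(t)+\beta(N_a(t))+\beta(N_{a^*}(t))$, and empirical-mean dominance can fail. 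Moreover, even if your claim held, ``weakly best'' would be insufficient: the necessity direction of Proposition~\ref{prop:constant} shows that a tie ($C_i=[\mu_i-\mu_{a^*}]^+$) defeats the attack, so the success argument genuinely needs the strict margin that $E$ supplies. The paper's proof sidesteps your claim entirely: it establishes $[\mu_a-\mu_{a^*}]^+<[\hat{\mu}_a(t)-\hat{\mu}_{a^*}(t)+\beta(N_a(t))+\beta(N_{a^*}(t))]^+$ under $E$ and then invokes the argument of Proposition~\ref{prop:constant} --- in the shifted problem seen by the algorithm, every non-target arm's perceived mean lies strictly below $\mu_{a^*}$, so the assumed high-probability regret bound forces $N_a(T)=O(\log T)$ for $a\neq a^*$. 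If you replace your empirical-mean dominance claim with this comparison to the true means, the rest of your argument goes through as in the paper; your closing caveat (the poisoned rewards are not i.i.d., since $\epsilon_t$ depends on the history and on $r_t$ itself) is well taken, and is a point the paper's own proof also treats only informally via ``the same argument as Proposition~\ref{prop:constant}.''
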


The proof of Theorem~\ref{thm:adaptive} is provided in Section~\ref{app:adaptive} in the supplementary material.
Theorem~\ref{thm:adaptive} shows that the ACE is universally successful for any bandit algorithm, without knowing any prior information on $\{\mu_a\}_{a\in\mathcal{A}}$.
Besides, the ACE incurs an high-probability accumulated attack cost as small as that of the oracle attack\footnote{High-probability bounds can be adapted from Proposition~\ref{prop:oracle}. } with only an additional bounded additive term, $O(4\beta(1)K\log T)$. That is, the ACE is close to the hindsight-oracle attack strategy. 
Moreover, the ACE even requires no knowledge of the bandit algorithm. This is an advantage over the algorithm-dependent online attack strategies designed in~\cite{jun2018adversarial} since the attacker may not know which bandit algorithm the learner is in practice. As far as we know, this is the first negative result showing that there is no robust bandit algorithm that can be immune to the adaptive online attack. This exposes a significant security threat to the bandit learning systems.

\section{Numerical Results}\label{sec:simul}

\begin{figure*}[t]
\centering
\begin{subfigure}[b]{0.3\textwidth}
    \includegraphics[width=\textwidth]{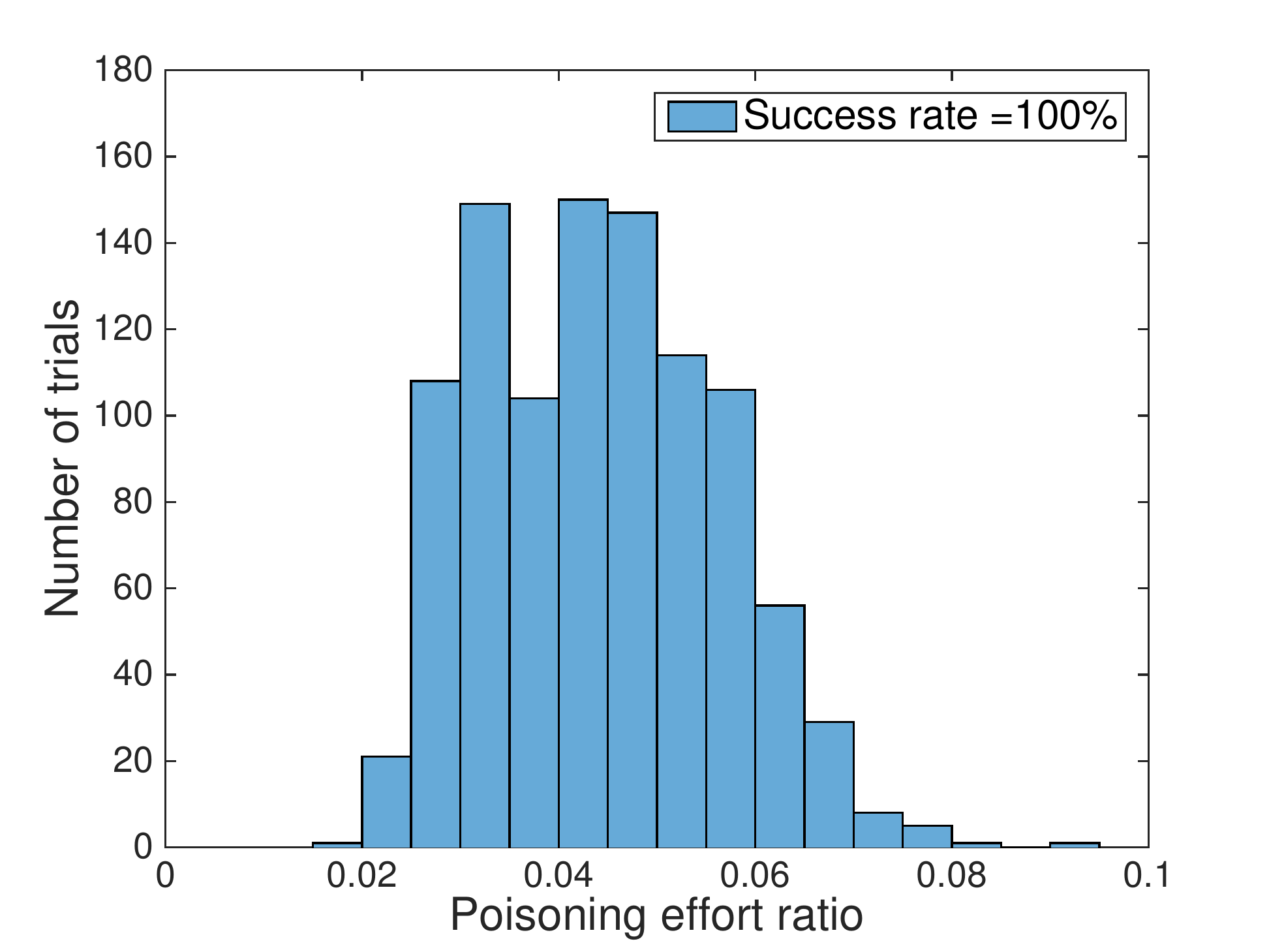}
     \caption{Attack on $\epsilon$-greedy algorithm}
     \label{fig:offlineGreedy}
\end{subfigure}
\begin{subfigure}[b]{0.3\textwidth}
    \includegraphics[width=\textwidth]{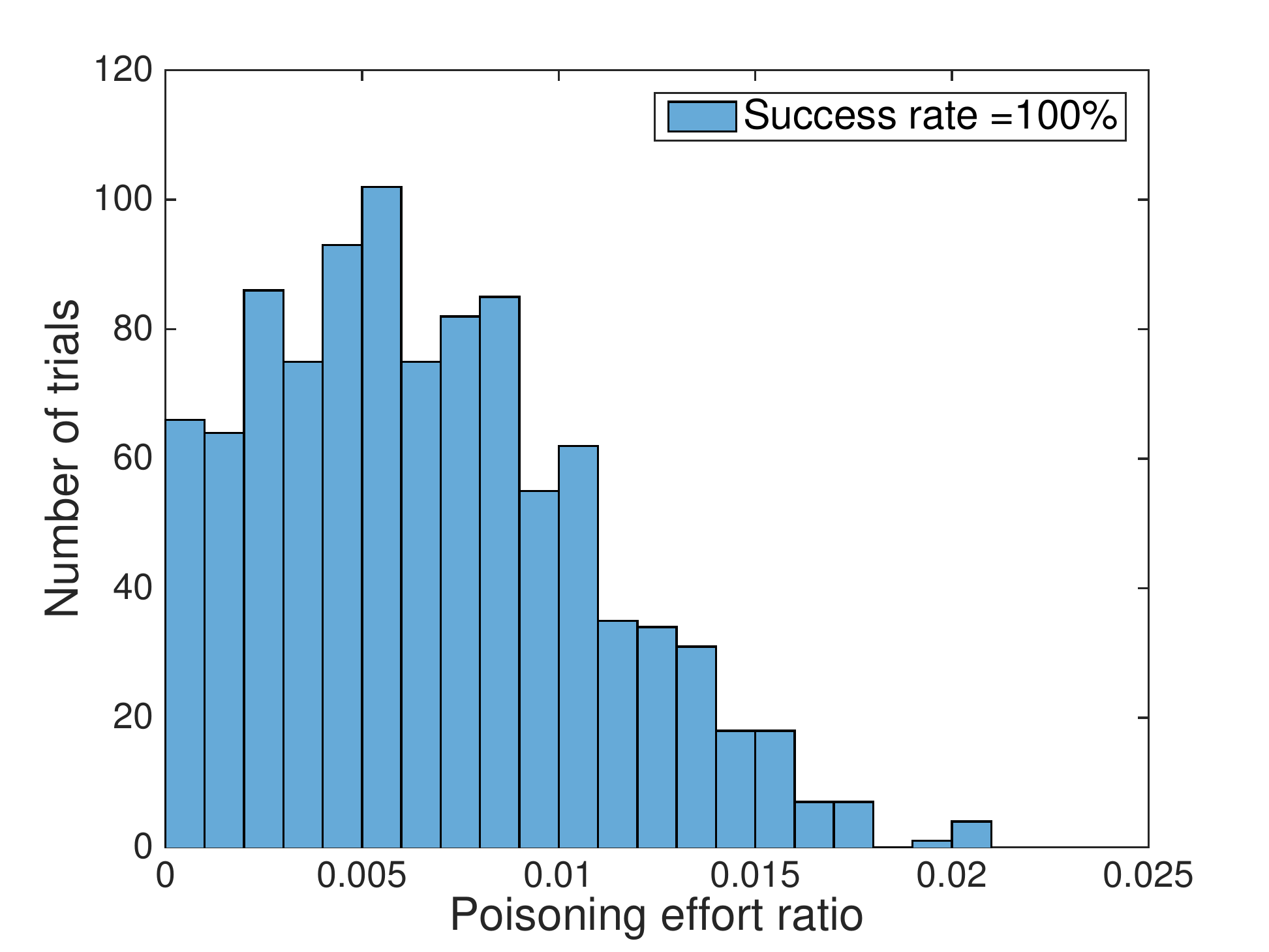}
     \caption{Attack on UCB algorithm}
     \label{fig:offlineUCB}
\end{subfigure}
\begin{subfigure}[b]{0.3\textwidth}
    \includegraphics[width=\textwidth]{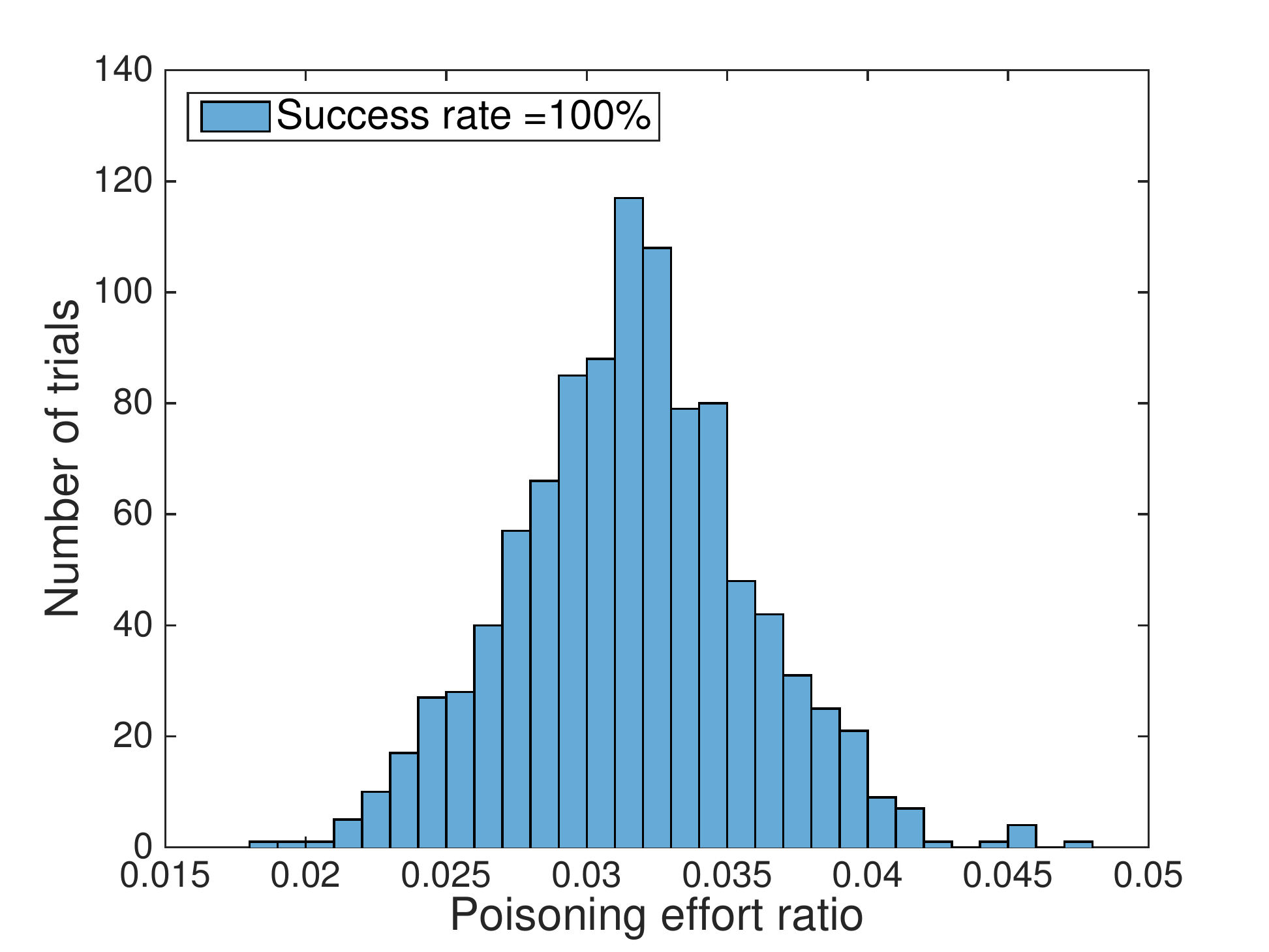}
     \caption{Attack on Thompson Sampling}
     \label{fig:offlineTS}
\end{subfigure}
\caption{Histograms of poisoning effort ratio in the offline attacks.}
\label{fig:offline}
\end{figure*}

In this section, we run simulations on attacking $\epsilon$-greedy, UCB and Thompson Sampling algorithms to illustrate our theoretical results. All the simulations are run in MATLAB and the codes can be found in the supplemental materials. Note that we implement $\epsilon$-greedy with $\alpha_t= \frac{1}{t}$.
\subsection{Offline attacks}
To study the effectiveness of the offline attacks, we consider the following experiment. The bandit has $K=5$ arms and the reward noise is a Gaussian distribution $\mathcal{N}(0,\sigma^2)$ with $\sigma = 0.1$. The attacker's target arm is arm $K$ such that $\mu_K= 0$, while the expected rewards of other arms are uniformly distributed in the unit interval $[0,1]$. We set $T=1000$ and the error tolerance to $\delta = 0.05$.

In each attack trial, we first generate the instance of the bandit by drawing the expected rewards from the uniform distribution on $[0,1]$. Then we run the bandit algorithm for $T$ rounds and collect all the historical data. Without any attack, the bandit algorithm would have converged to some optimal arm in the trial, which is not the target arm as the target arm is the one with the least payoff. Then we set the margin parameter as $\xi=0.001$ and run the corresponding offline attacks. The attack strategy is the solution of the optimization problem $P_i$. Since all the problems are quadratic program with linear (convex) constraints, they can be solved by standard optimization tools. 

We run 1000 attack trials. Note that the attack cost depends on the instance of the bandit. To evaluate the attack cost, we use the poisoning effort ratio~\cite{ma2018data}:
\begin{equation}
\frac{||\bepsilon||_2}{||\by||_2}=\sqrt{\frac{\sum_{a\in\mathcal{A}}||\bepsilon_a||^2_2}{\sum_{a\in\mathcal{A}}||\by_a||^2_2}}.
\end{equation}
The poisoning effort ratio measures the ratio of the total cost over the norm of the original rewards. To evaluate the attack effectiveness, we check whether the poisoned data satisfy the constraint of the optimization problem $P$. If so, the bandit algorithm will play the target arm with probability at least $1-\delta$. 

Figure~\ref{fig:offline} shows the results of the attack against the $\epsilon$-greedy, UCB and Thompson Sampling. First, the attack strategies are effective as all the attacks are successful. Second, the attack costs are small as shown in the histograms. The ratios of attacking $\epsilon$-greedy, UCB and Thompson Sampling are less than $10\%$, $2\%$ and $5\%$.

\subsection{Online attacks}
\begin{figure*}[t]
\centering
\begin{subfigure}[b]{0.3\textwidth}
    \includegraphics[width=\textwidth]{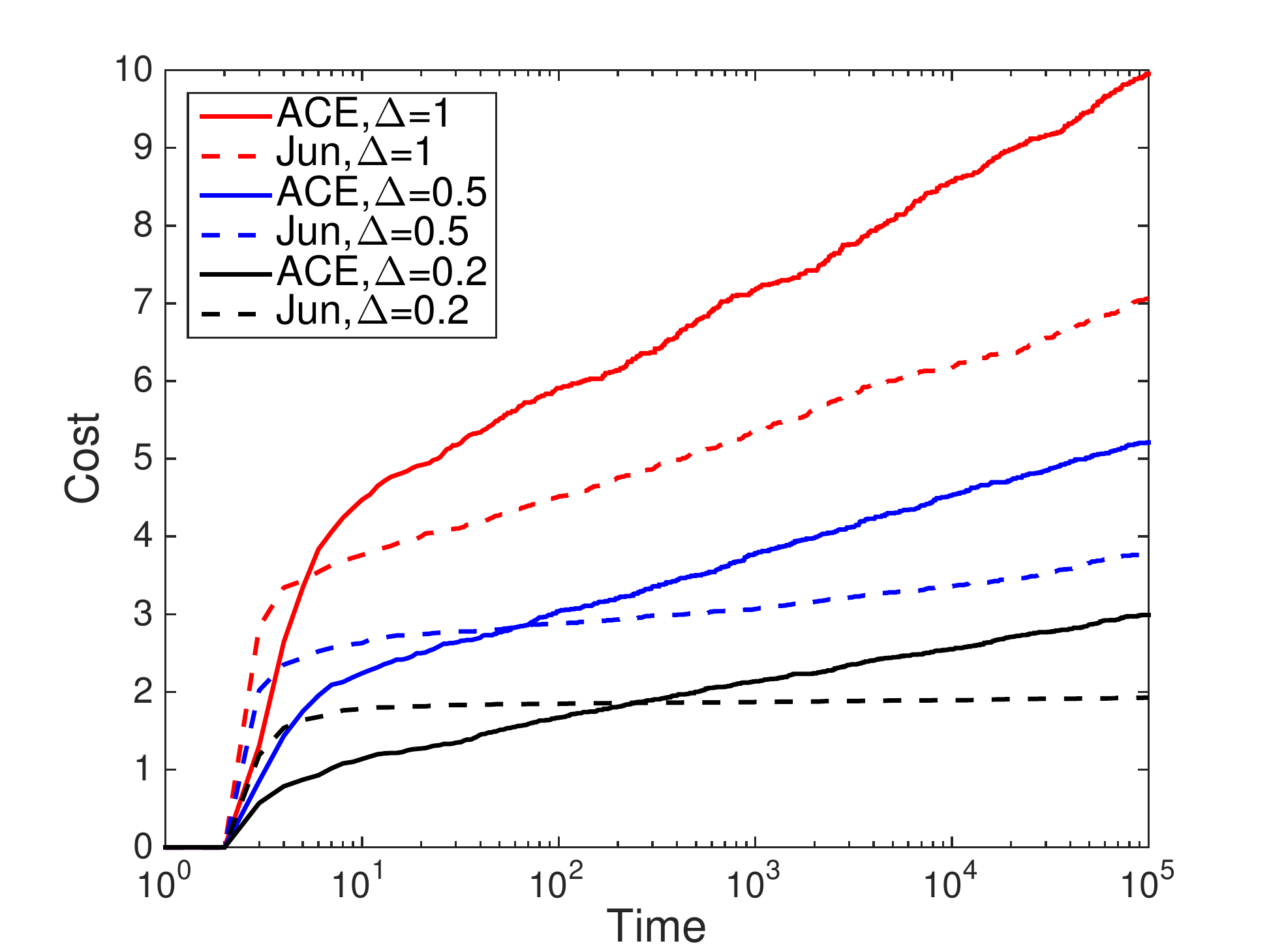}
\end{subfigure}
\begin{subfigure}[b]{0.3\textwidth}
    \includegraphics[width=\textwidth]{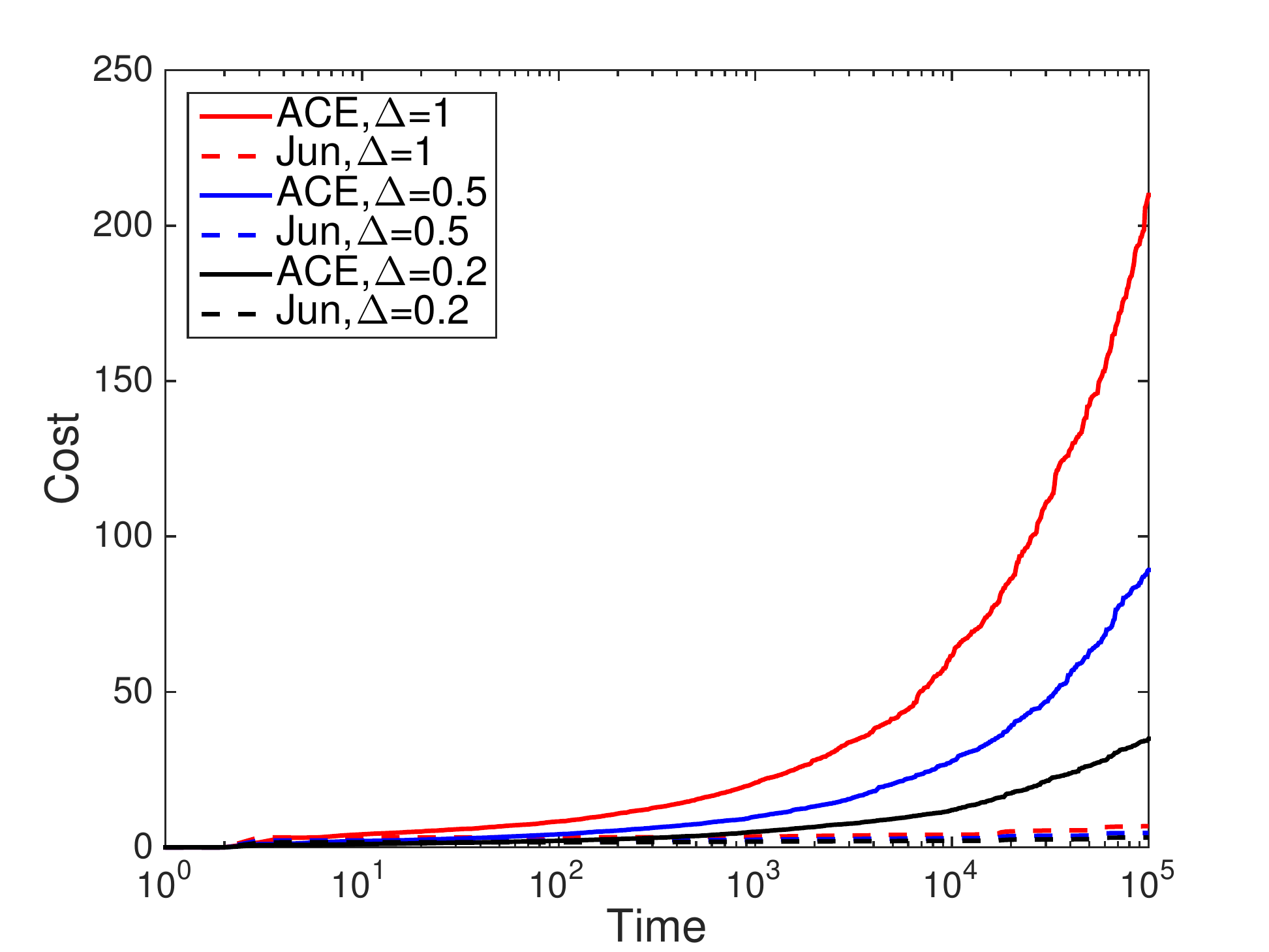}
\end{subfigure}
\begin{subfigure}[b]{0.3\textwidth}
    \includegraphics[width=\textwidth]{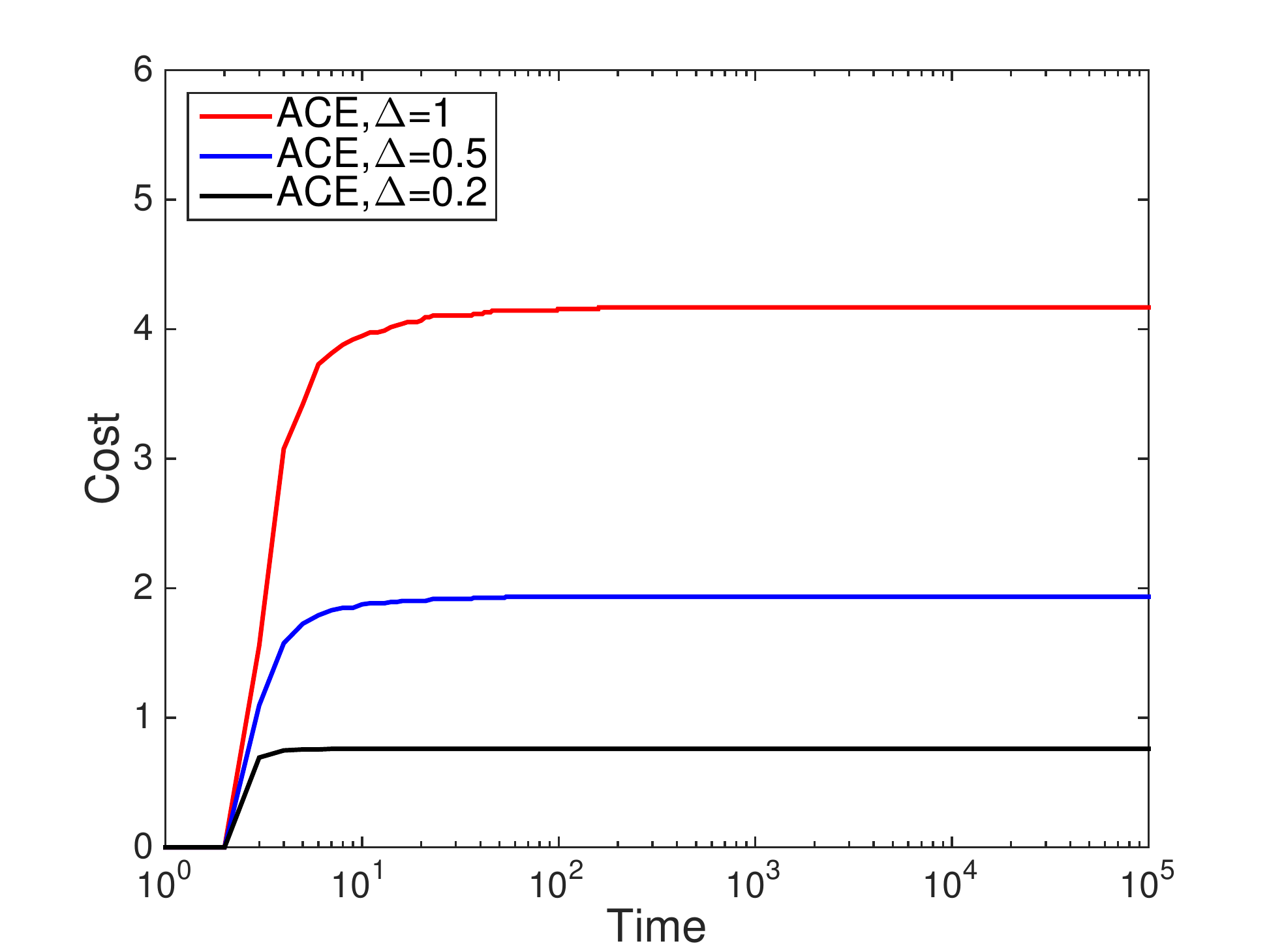}
\end{subfigure}

\begin{subfigure}[b]{0.3\textwidth}
    \includegraphics[width=\textwidth]{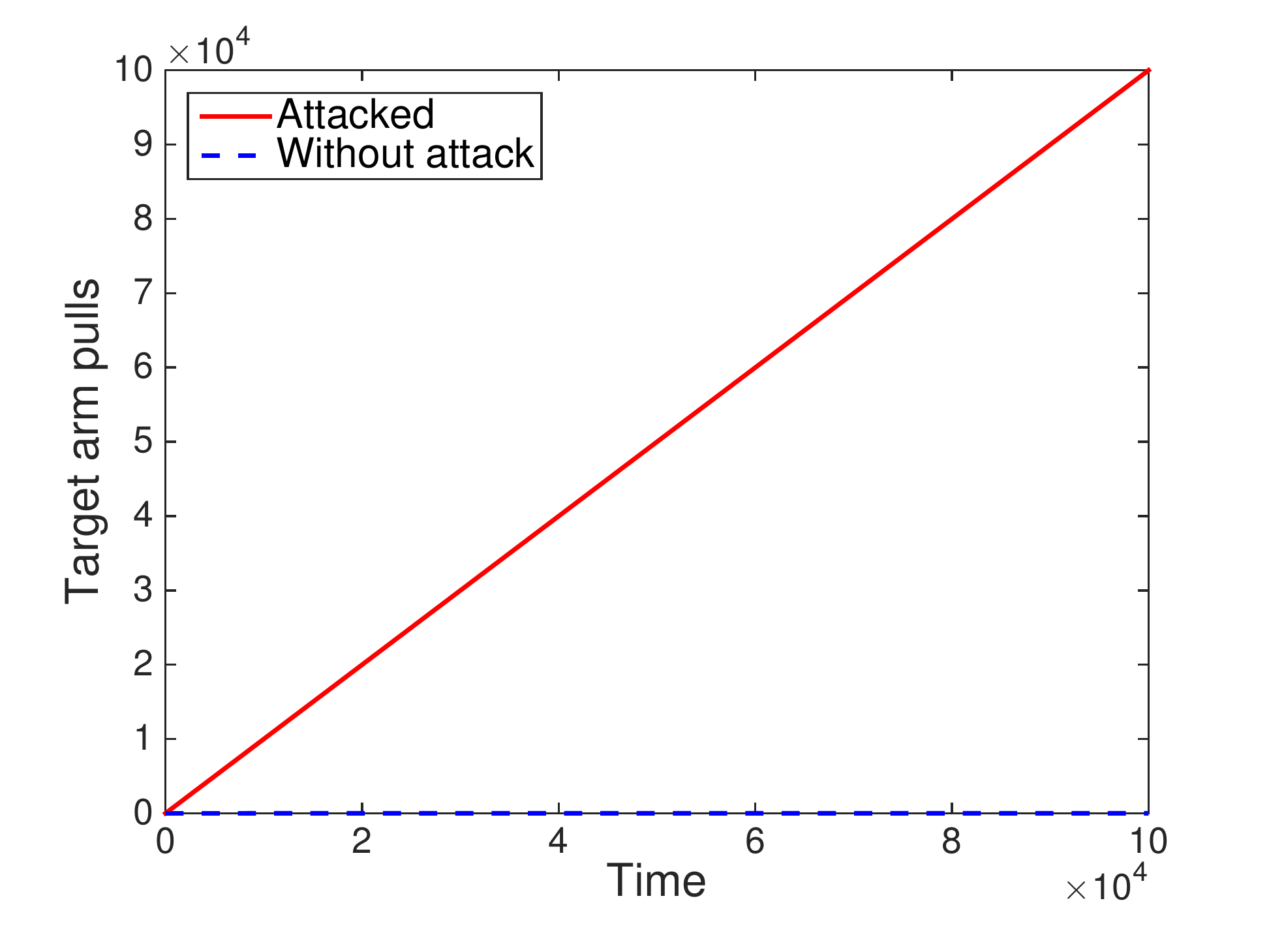}
     \caption{Attack on $\epsilon$-greedy algorithm}
     \label{fig:onlineGreedy}
\end{subfigure}
\begin{subfigure}[b]{0.3\textwidth}
    \includegraphics[width=\textwidth]{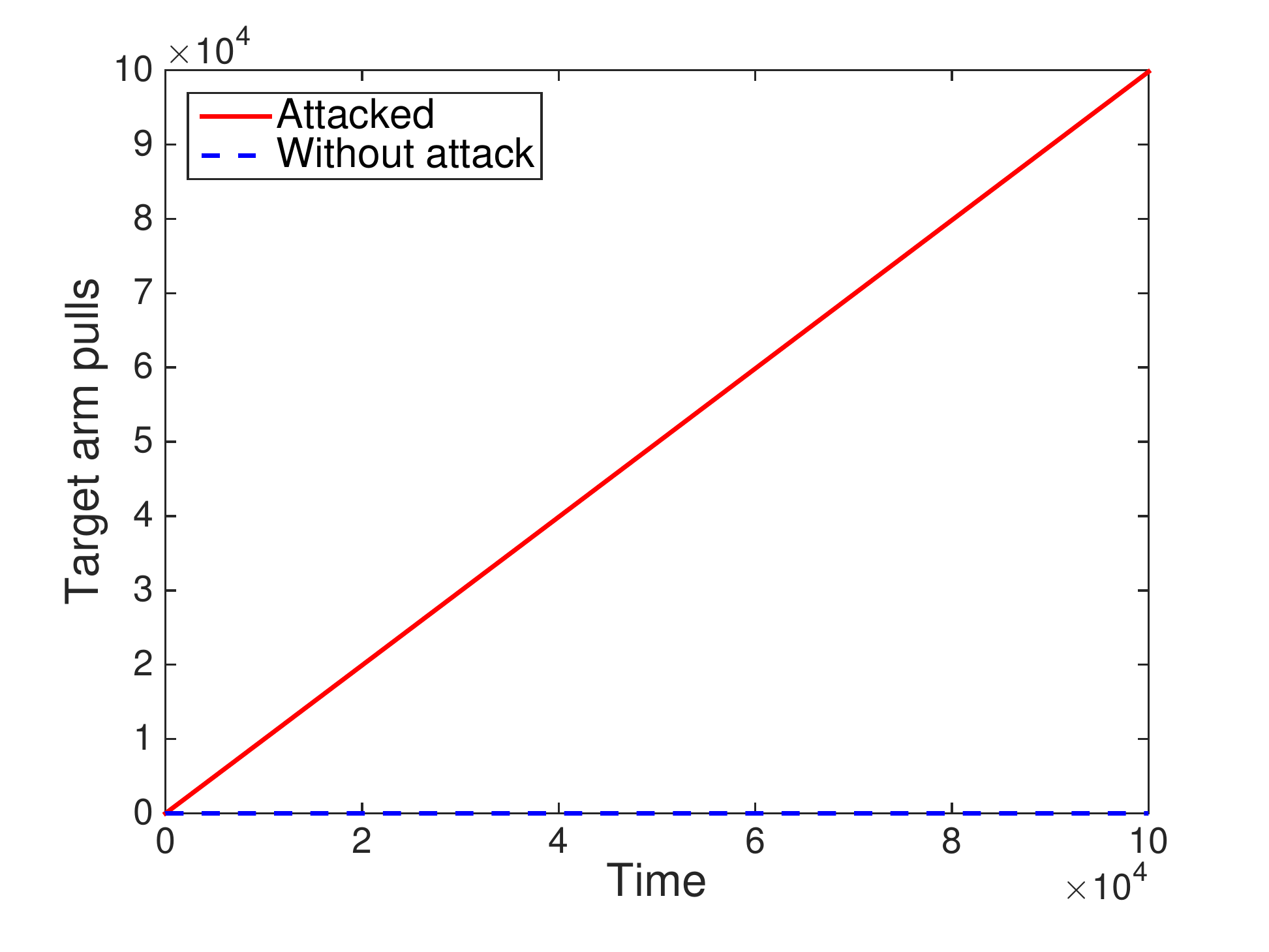}
     \caption{Attack on UCB algorithm}
     \label{fig:onlineUCB}
\end{subfigure}
\begin{subfigure}[b]{0.3\textwidth}
    \includegraphics[width=\textwidth]{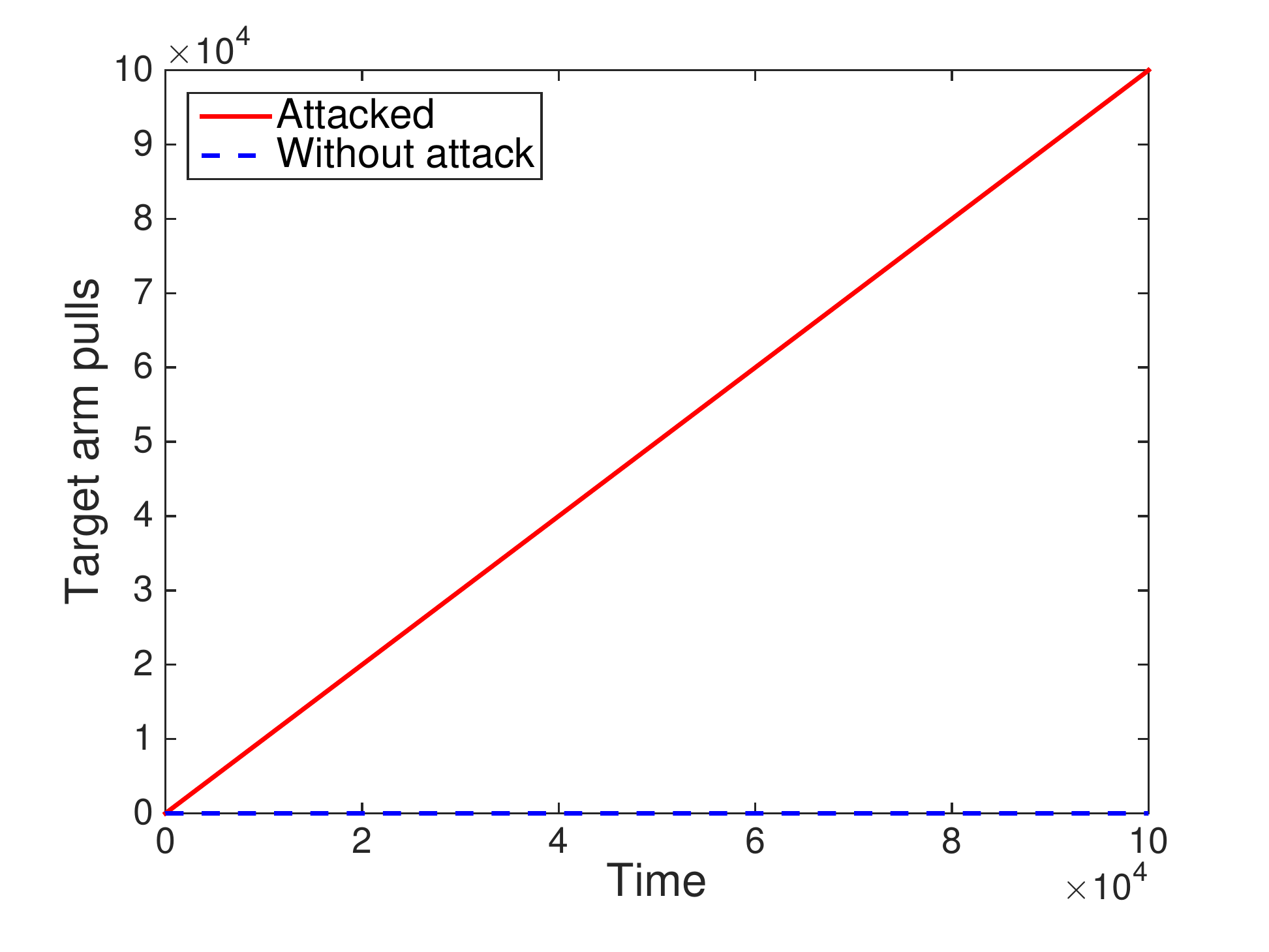}
     \caption{Attack on Thompson Sampling}
     \label{fig:onlineTS}
\end{subfigure}
\caption{Attack cost and target arm pulls in the online attacks.}
\label{fig:online}
\end{figure*}

We compare our adaptive attack strategy with two attack strategies proposed by~\cite{jun2018adversarial}. Note that these two attacks are against $\epsilon$-greedy and UCB algorithm and require the knowledge of the algorithm. In contrast, we highlight that our attack strategy ACE is an universal attack strategy against any bandit algorithm. 

We consider the following experiment. The bandit has two arms. The expected rewards of arms 1 and 2 are $\mu_1=\Delta$ and $\mu_2=0$ with $\Delta>0$. The attacker's target is arm 2. The noise of the rewards is i.i.d. sampled from the Gaussian distribution $\mathcal{N}(0,\sigma^2)$ with $\sigma= 0.1$. We set the error tolerance to $\delta = 0.05$ and time horizon to $T = 10^5$ rounds. For the implementations of the attack strategies proposed by~\cite{jun2018adversarial}, we choose the tuning parameter $\Delta_0=\sigma$, which is suggested by~\cite{jun2018adversarial} when $T$ is not known to the attacker. We run 100 attack trials with different $\Delta$ values. 

Figure~\ref{fig:online} shows the average attack cost and number of target arm pulls in the online attacks. Note that the target arm pulls are the cases when $\Delta = 1$. First, we compare the number of target arm pulls with ACE attack and without. ACE attack dramatically forces the bandit algorithm to pull the target arm linearly in time. Second, the attack costs are relatively small compared to the loss of the bandit algorithm, which is linear in time. Generally, the attack costs of ACE attack are bounded by $O(\log T)$ and increase as the reward gap $\Delta$ becomes larger. These verify the result of Theorem~\ref{thm:adaptive}. On the other hand, the attack costs on Thompson Sampling and $\epsilon$-greedy are relatively smaller than that of UCB. This is because Thompson Sampling and $\epsilon$-greedy converges to the ``optimal" arm very fast while the exploration for ``non-optimal" arm may still increase over time. Finally, compared to the algorithm-specific attacks proposed by~\cite{jun2018adversarial}, the attack cost of ACE on $\epsilon$-greedy is slightly worse while the attack cost of ACE on UCB is much larger than Jun's attack. In the case of attacking UCB algorithm, our universal attack strategy takes more cost than the algorithm-specific attack, but without the need to know the algorithm.

\section{Conclusion}\label{sec:conclusion}
In this work, we study the open problem of data poisoning attacks on bandit algorithms. We propose an offline attack framework for the stochastic bandits and propose three algorithm-specific offline attack strategies against $\epsilon$-greedy, UCB and Thompson Sampling. Then, we study an online attack against the bandit algorithms and propose the adaptive attack strategy that can hijack the behavior of any bandit algorithm without requiring the knowledge of the bandit algorithm. Our theoretical results and simulations show that the bandit algorithms are vulnerable to the poisoning attacks in both online and offline setting.

\section*{Acknowledgements}
This work has been supported in part by a grant from DTRA (HDTRA1-14-1-0058), a grant from Office of Naval Research (N00014-17-1-2417), grants from National Science foundation (CNS-1446582, CNS-1518829, CNS-1409336) and a grant from Army Research Office (WN11NF-15-1-0277). This work has also been supported by Institute for Information \& communications Technology Promotion (IITP) grant funded by the Korea government (MSIT), (2017-0-00692, Transport-aware Streaming Technique Enabling Ultra Low-Latency AR/VR Services). 
\bibliography{refs}
\bibliographystyle{icml2019}
\appendix
\onecolumn
\section{Details on the offline attacks}

\subsection{Proof of Theorem \ref{thm:greedy}}\label{app:greedy}
\begin{proof}
The optimization problem $P_1$ is a quadratic program with linear constraints in $\{\bepsilon_a\}_{a\in\mathcal{A}}$. Now it remains to show that the constraint set is non-empty.

Given any reward instance $\{\by_a\}_{a\in\mathcal{A}}$, any margin parameter $\xi>0$ and any $\bepsilon_{a^*}$, one can check that
\begin{equation}
\bepsilon_a = \left[(\by_{a^*}+\bepsilon_{a^*})^T\mathds{1}/m_{a^*}  - \by_{a}^T\mathds{1}/m_{a}-\xi\right]\mathds{1},~~~~\forall a\not = a^*,
\end{equation}
satisfies the constraints of problem $P_1$. That is the constraint set of problem $P_1$ is non-empty.

Thus, there exists at least one optimal solution of problem $P_1$ since $P_1$ is a quadratic program with non-empty and compact constraints. The result follows from Proposition~\ref{prop:offline}.
\end{proof}

\subsection{Details on attacking Thompson Sampling}
\begin{lemma}\label{lem:conv}
Given some constants $C_i>0$ for any $i<K$. The function $f(\bx) = \sum_{i=1}^{K-1}\Phi(C_i x_i - C_i x_K)$ is convex on the domain $D=\{\bx\in\mathcal{R}^K|x_i-x_K\leq0, \forall i <K\}$.
\end{lemma}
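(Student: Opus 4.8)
The plan is to exhibit $f$ as a sum of convex functions, exploiting the fact that the standard Gaussian cdf $\Phi$ is convex on the negative half-line, together with the defining inequality of the domain $D$. First I would record the second-derivative sign of $\Phi$: since $\Phi'(z) = \phi(z)$ and $\Phi''(z) = \phi'(z) = -z\phi(z)$, we have $\Phi''(z) \geq 0$ exactly when $z \leq 0$. Hence $\Phi$ is convex on the interval $(-\infty, 0]$ (and concave on $[0,+\infty)$), which is precisely the region in which the arguments appearing in $f$ lie on the domain $D$.

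Next I would observe that $D = \{\bx : x_i - x_K \leq 0, \forall i < K\}$ is convex, being a finite intersection of half-spaces. For each $i < K$ introduce the affine map $\ell_i(\bx) = C_i x_i - C_i x_K$. Since $C_i > 0$ and $x_i - x_K \leq 0$ on $D$, the image $\ell_i(D)$ is contained in $(-\infty, 0]$, i.e.\ exactly the region on which $\Phi$ was just shown to be convex.

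The key step is the standard composition rule: if $g$ is convex on an interval $I$ and $\ell$ is an affine map sending a convex set $S$ into $I$, then $g \circ \ell$ is convex on $S$ (this follows directly from affinity of $\ell$ and convexity of $g$ on $I$, with no appeal to differentiability). Applying it with $g = \Phi$, $I = (-\infty,0]$, $S = D$ and $\ell = \ell_i$ shows that each summand $\bx \mapsto \Phi(\ell_i(\bx))$ is convex on $D$. Finally, $f = \sum_{i=1}^{K-1} \Phi \circ \ell_i$ is a sum of convex functions and is therefore convex on $D$, which completes the argument.

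I expect the only subtle point to be the restriction to the negative half-line: $\Phi$ is \emph{not} globally convex, so the argument genuinely requires the domain constraint $x_i \leq x_K$ to guarantee that each affine argument $\ell_i(\bx)$ stays in the convex region of $\Phi$. This is precisely why the linear constraints (\ref{eqn:meanTS}) are included in problem $P_3$, and why the lemma is stated on $D$ rather than on all of $\mathcal{R}^K$.
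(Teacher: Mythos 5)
Your proof is correct, and it takes a genuinely different route from the paper. The paper proves the lemma by brute force: it computes the full Hessian $H$ of $f$, notes $\frac{\partial^2 f}{\partial x_i^2} = -\frac{C_i^2}{\sqrt{2\pi}}e^{-(C_ix_i-C_ix_K)^2/2}(C_ix_i-C_ix_K)\geq 0$ on $D$, writes out all the cross-terms $\frac{\partial^2 f}{\partial x_i\partial x_K}=-\frac{\partial^2 f}{\partial x_i^2}$, and then verifies positive semi-definiteness via the identity $\by^T H \by = \sum_{i=1}^{K-1}\frac{\partial^2 f}{\partial x_i^2}(y_i-y_K)^2\geq 0$. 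You instead observe that each summand is $\Phi\circ\ell_i$ with $\ell_i$ affine mapping $D$ into $(-\infty,0]$, where $\Phi''(z)=-z\phi(z)\geq 0$, and invoke the affine-composition and sum rules for convexity. The two arguments rest on the same analytic fact --- the sign of $\Phi''$ on the negative half-line, which is exactly what makes the paper's diagonal entries nonnegative --- but your decomposition makes the structure transparent: the paper's quadratic-form identity $\sum_i \frac{\partial^2 f}{\partial x_i^2}(y_i-y_K)^2$ is precisely the per-summand one-dimensional convexity that you use directly, without any matrix bookkeeping. Your version is also more modular and slightly more general (it works verbatim for any $g$ convex on a half-line in place of $\Phi$, with no smoothness of the sum needed beyond the scalar function), whereas the paper's computation is tied to the specific Gaussian cdf. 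Your closing remark correctly identifies why the domain restriction is essential ($\Phi$ is concave on $[0,+\infty)$, so the lemma fails on all of $\mathcal{R}^K$); one small caveat: the paper actually notes that the constraints~(\ref{eqn:meanTS}) are \emph{redundant} in $P_3$ whenever $\delta<\frac{K-1}{2}$, since constraint~(\ref{eqn:conTS}) with $\Phi(0)=0.5$ already forces the means into $D$ --- they are needed for the convexity analysis, but not as active constraints in the program.
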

\begin{proof}
We prove the result by checking the Hessian matrix $H$ of function $f(\bx)$. Note that $\Phi(x)$ is the cumulative distribution function of the standard normal distribution $\mathcal{N}(0,1)$. For any $i<K$, we have that 
\begin{align}
\frac{\partial f}{\partial x_i} &= \frac{C_i}{\sqrt{2\pi}}e^{-(C_i x_i - C_i x_K)^2/2},\\
\frac{\partial^2 f}{\partial x_i^2} &= -\frac{C_i^2}{\sqrt{2\pi}}e^{-(C_i x_i - C_i x_K)^2/2}(C_i x_i - C_i x_K).
\end{align}
On the other hand, we have that
\begin{align}
\frac{\partial f}{\partial x_K} &= \sum_{i=1}^{K-1}-\frac{C_i}{\sqrt{2\pi}}e^{-(C_i x_i - C_i x_K)^2/2} = \sum_{i=1}^{K-1}-\frac{\partial f}{\partial x_i},\\
\frac{\partial^2 f}{\partial x_K^2} &= \sum_{i=1}^{K-1}-\frac{C_i^2}{\sqrt{2\pi}}e^{-(C_i x_i - C_i x_K)^2/2}(C_i x_i - C_i x_K) = \sum_{i=1}^{K-1}\frac{\partial^2 f}{\partial x_i^2}.
\end{align}
Now, we derive the other coefficients. For any pair $(i,j)$ such that $i\not = j$, $i<K$ and $j<K$, we have that
\begin{equation}
\frac{\partial^2 f}{\partial x_i\partial x_j} = 0.
\end{equation}
For any $i<K$, we have that
\begin{align}
\frac{\partial^2 f}{\partial x_i\partial x_K} & = \frac{C_i^2}{\sqrt{2\pi}}e^{-(C_i x_i - C_i x_K)^2/2}(C_i x_i - C_i x_K) = - \frac{\partial^2 f}{\partial x_i^2},\\
\frac{\partial^2 f}{\partial x_K\partial x_i} &= - \frac{\partial^2 f}{\partial x_i^2}
\end{align}
Since the constants $C_i$ are positive, we have that $\frac{\partial^2 f}{\partial x_i^2} \geq0 $ in the domain $D$. The Hessian matrix of $f$ is the following,
\begin{equation}
H= 
\begin{bmatrix}
\frac{\partial^2 f}{\partial x_1^2} & 0  &\dots & 0 &-\frac{\partial^2 f}{\partial x_1^2}\\
0 & \frac{\partial^2 f}{\partial x_2^2}  &\dots & 0 & -\frac{\partial^2 f}{\partial x_2^2}\\
\vdots & \vdots & \ddots &\vdots & \vdots \\
0 & 0 & \dots & \frac{\partial^2 f}{\partial x_{K-1}^2} &-\frac{\partial^2 f}{\partial x_{K-1}^2}\\
-\frac{\partial^2 f}{\partial x_1^2} & -\frac{\partial^2 f}{\partial x_2^2} &\dots & -\frac{\partial^2 f}{\partial x_{K-1}^2}  & \sum_{i=1}^{K-1} \frac{\partial^2 f}{\partial x_i^2}
\end{bmatrix}.
\end{equation}
Hence, for any vector $\by\in\mathcal{R}^K$, we have that
\begin{equation}
\by^TH\by = \by^T
\begin{bmatrix}
\frac{\partial^2 f}{\partial x_1^2}(y_1-y_K)\\
\frac{\partial^2 f}{\partial x_2^2}(y_2-y_K)\\
\vdots\\
\frac{\partial^2 f}{\partial x_{K-1}^2}(y_{K-1}-y_K)\\
\sum_{i=1}^{K-1}-\frac{\partial^2 f}{\partial x_i^2}(y_i-y_K)\\
\end{bmatrix}
=\sum_{i=1}^{K-1}\frac{\partial^2 f}{\partial x_i^2}(y_i-y_K)^2 \geq 0.
\end{equation}
Since $H$ is positive semi-definite, we show that $f(\bx)$ is convex on the domain $D$.
\end{proof}

\subsection{Proof of Proposition~\ref{prop:convex}}\label{app:convex}
\begin{proof}
By Lemma~\ref{lem:conv} and the fact that affine mapping keeps the convexity, we have the result.
\end{proof}

\subsection{Another relaxation of P for Thompson Sampling}\label{app:P4}
We may find a sufficient constraint to equation (\ref{eqn:unionTS}) as 
\begin{equation}
\Phi\left(\frac{\tilde{\mu}_a(T)-\tilde{\mu}_{a^*}(T)}{\sigma^3\sqrt{1/m_a+1/m_{a^*}}}\right)\leq\frac{\delta}{K-1},~~~~\forall a\not=a^*.
\end{equation}
Then, we derive another relaxation of P as 
\begin{align}
P_4: \min_{\bepsilon_a:a\in\mathcal{A}} &~~~~ \sum_{a\in\mathcal{A}}||\bepsilon_a||^2_2\\
s.t. &~~~~\tilde{\mu}_a(T)-\tilde{\mu}_{a^*}(T)\leq \sigma^3\sqrt{1/m_a+1/m_{a^*}} \Phi^{-1}\left(\frac{\delta}{K-1}\right), ~~~~\forall a\not = a^*
\end{align}
Note that problem $P_4$ is a quadratic program with linear constraints.

\section{Details on the online attacks}
\subsection{Proof of Proposition~\ref{prop:constant}}\label{app:constant}
\begin{proof}
By equation (\ref{eqn:regretDecompose}), a logarithmic regret bound implies that the bandit algorithm satisfies $\mathbb{E}[N_a(T)] = O(\log T)$ for any suboptimal arm $a$. Note that the oracle constant attack shifts the expected rewards of all arms except for the target arm $a^*$. Since $C_a > [\mu_{a}-\mu_{a^*}]^+$, $\forall a \not= a^*$, the best arm is now the target arm $a^*$. Then, the bandit algorithm satisfies $\mathbb{E}[N_a(T)] = O(\log T)$, $\forall a \not= a^*$. Thus, the expected number of pulling the target arm is
\begin{equation}
\mathbb{E}[N_{a^*}(T)] = T-\sum_{a\not= a^*}\mathbb{E}[N_a(T)] = T-o(T).
\end{equation}
Since the attacker does not attack the target arm, we have that
\begin{equation}
\mathbb{E}[C(T)] = \mathbb{E}\left[\sum_{t=1}^T|\epsilon_t|\right] = \sum_{a\not = a^*} C_a \mathbb{E}[N_a(T)] = O\left(\sum_{a\not= a^*}C_a\log T\right).
\end{equation}

On the other hand, suppose there exists an arm $i\not= a^*$ such that $C_i \leq [\mu_{i}-\mu_{a^*}]^+$, then the attack is not successful. In the case that $C_i < [\mu_{i}-\mu_{a^*}]^+$, the arm $i$ is the best arm rather than the target arm $a^*$ in the shifted bandit problem. That is the expected number of pulling arm $a^*$ is $\mathbb{E}[N_{a^*}(T)] = O(\log T)$. In the case that $C_i = [\mu_{i}-\mu_{a^*}]^+$, the arm $i$ and $a^*$ are both the best arms. That is the expected attack cost is $\mathbb{E}[C(T)] = T-o(T)$. In neither case is the attack successful. This concludes the proof.
\end{proof}

\subsection{Proof of Theorem~\ref{thm:adaptive}}\label{app:adaptive}
\begin{proof}
Given any $\delta >0$, we have that $\mathbb{P}(E) > 1-\delta$ by Lemma~\ref{lem:concen}.
Under the event $E$, we have that at any time $t$ and for any arm $a\not=a^*$,
\begin{align}
\mu_a - \mu_{a^*} & < \hat{\mu}_a(t) - \mu_{a^*} + \beta(N_a(t))\\
&< \hat{\mu}_a(t) - \hat{\mu}_{a^*}(t) + \beta(N_a(t))+\beta(N_{a^*}(t)),
\end{align}
which implies that
\begin{equation}
[\mu_a - \mu_{a^*}]^+ < [\hat{\mu}_a(t) - \hat{\mu}_{a^*}(t) + \beta(N_a(t))+\beta(N_{a^*}(t))]^+.
\end{equation}
By the same argument in the proof of Proposition~\ref{prop:constant}, we have that under event $E$, the attacker is taking an effective attack for any bandit algorithm. 

Recall that the bandit algorithm has a high-probability bound such that the regret is bounded by $O(\log T)$ with probability at least $1-\delta$. Under event $E$, we have that $N_a(T) = O(\log T)$ for any $a\not=a^*$ with high probability. Thus, with probability at least $1-2\delta$, we have that $N_{a^*}(T) = T-o(T)$. It remains to bound the cost of the attacker, i.e., $\sum_{t}|\epsilon_t|$.

Given any arm $a\not=a^*$, any time $t$ and under the event $E$, we have that
\begin{align}
\hat{\mu}_a(t)-\hat{\mu}_{a^*}(t) &<\mu_a -\hat{\mu}_{a^*}(t) + \beta(N_a(t))\\
&<\mu_a -\mu_{a^*} + \beta(N_a(t)) + \beta(N_{a^*}(t)).
\end{align}
This implies that
\begin{align}
[\hat{\mu}_a(t)-\hat{\mu}_{a^*}(t)+&\beta(N_a(t)) + \beta(N_{a^*}(t))]^+ \\
&<[\mu_a -\mu_{a^*} + 2\beta(N_a(t)) + 2\beta(N_{a^*}(t))]^+\\
&\leq [\mu_a -\mu_{a^*}]^+ + 2\beta(N_a(t)) + 2\beta(N_{a^*}(t)).
\end{align}
Thus, the first statement follows. By the fact that $\beta(n)$ is a decreasing function, we have that
\begin{align}
\sum_{t=1}^T|\epsilon_t| & \leq \sum_{t=1}^T\left([\mu_{a_t}-\mu_{a^*}]^+ + 4\beta(1)\right) \mathds{1}\{a_t \not= a^*\}\\
& =\sum_{a\not = a^*}\left([\mu_{a}-\mu_{a^*}]^+ + 4\beta(1)\right)N_a(T) \\
&\leq O\left(\sum_{a\not= a^*}\left([\mu_a-\mu_{a^*}]^+ + 4\beta(1)\right) \log T\right).
\end{align}
\end{proof}

\end{document}